\theoremstyle{plain}
\newtheorem{theorem}{Theorem}[section]
\newtheorem{lemma}[theorem]{Lemma}
\theoremstyle{definition}
\newtheorem{definition}[theorem]{Definition}
\newtheorem{assumption}[theorem]{Assumption}
\theoremstyle{remark}
\newlength{\twosubht}
\newsavebox{\twosubbox}
\def\eqref#1{equation~\ref{#1}}
\def\1{\bm{1}}
\newcommand{\train}{\mathcal{D}}
\def\rr{{\textnormal{r}}}
\def\vtheta{{\bm{\theta}}}
\def\vb{{\bm{b}}}
\def\vc{{\bm{c}}}
\def\vg{{\bm{g}}}
\def\vx{{\bm{x}}}
\def\vy{{\bm{y}}}
\DeclareMathAlphabet{\mathsfit}{\encodingdefault}{\sfdefault}{m}{sl}
\SetMathAlphabet{\mathsfit}{bold}{\encodingdefault}{\sfdefault}{bx}{n}
\def\gN{{\mathcal{N}}}
\newcommand{\E}{\mathbb{E}}
\newcommand{\R}{\mathbb{R}}
\DeclareMathOperator*{\argmax}{arg\,max}
\newcommand{\nb}[2]{
    \fcolorbox{gray}{yellow}{\bfseries\sffamily\scriptsize#1}
    {$\blacktriangleright$#2$\blacktriangleleft$}
   }
\newcommand{\nb}[2]{}
\newcommand \hta[1]{\nb{HTA}{\textcolor{blue}{\textsl{#1}}}} 
\newcommand{\probdist}{\mathcal{D}} 
\newcommand{\probinst}{P} 
\newcommand{\feas}{\Omega} 
\newcommand{\origobj}{f} 
\newcommand{\features}{\phi} 
\newcommand{\surrobjlin}{\hat{c}} 
\newcommand{\nn}{NN} 
\newcommand{\nnparams}{\bm{\theta}} 
\newcommand{\loss}{\mathcal{L}} 
\newcommand{\ours}{\texttt{SurCo}}
\newcommand{\optver}{\texttt{zero}}
\newcommand{\optvershort}{\texttt{zero}}
\newcommand{\trainver}{\texttt{prior}}
\newcommand{\trainvershort}{\texttt{prior}}
\newcommand{\hybridver}{\texttt{hybrid}}
\newcommand{\hybridvershort}{\texttt{hybrid}}
\newcommand{\baselinedomain}{\texttt{Domain-Heuristic}}
\newcommand{\baselinelookahead}{\texttt{Greedy}}
\newcommand{\baselinemartin}{\texttt{Pass-Through}}
\newcommand{\cD}{\mathcal{D}}
\def\zz{\mathbb{Z}}
\def\rr{\mathbb{R}}
\def\vtheta{\boldsymbol{\theta}}
\def\vphi{\boldsymbol{\phi}}
\def\train{\mathrm{train}}
\def\eval{\mathrm{eval}}
\def\direct{\mathrm{direct}}
\def\cL{\mathcal{L}}
\def\vol{\mathrm{vol}}
\newcommand\revision[1]{{\color{blue}#1}}
\icmltitlerunning{\ours{}: Learning Linear SURrogates for COmbinatorial Nonlinear Optimization Problems}
\begin{document}

\twocolumn[
\icmltitle{\ours{}: Learning Linear SURrogates \\ for COmbinatorial Nonlinear Optimization Problems}



\icmlsetsymbol{equal}{*}

\begin{icmlauthorlist}
\icmlauthor{Aaron Ferber}{usc}
\icmlauthor{Taoan Huang}{usc}
\icmlauthor{Daochen Zha}{rice}
\\
\icmlauthor{Martin Schubert}{fair}
\icmlauthor{Benoit Steiner}{ant}
\icmlauthor{Bistra Dilkina}{usc}
\icmlauthor{Yuandong Tian}{fair}

\end{icmlauthorlist}

\icmlaffiliation{fair}{Meta AI, FAIR}
\icmlaffiliation{usc}{Center for AI in Society, University of Southern California}
\icmlaffiliation{rice}{Rice University}
\icmlaffiliation{ant}{Anthropic}


\icmlcorrespondingauthor{Aaron Ferber}{aferber@usc.edu}
\icmlcorrespondingauthor{Yuandong Tian}{yuandong@meta.com}

\icmlkeywords{Machine Learning, Combinatorial Optimization, Nonlinear Optimization}

\vskip 0.3in
]



\printAffiliationsAndNotice{Work done during Aaron and Taoan's internship in Meta AI. Project page at \url{https://sites.google.com/usc.edu/surco/}}  

\def\surco{\texttt{SurCo}}
\def\szero{\texttt{SurCo}-\texttt{zero}}
\def\sprior{\texttt{SurCo}-\texttt{prior}}
\def\shybrid{\texttt{SurCo}-\texttt{hybrid}}

\begin{abstract}
Optimization problems with nonlinear cost functions and combinatorial constraints appear in many real-world applications but remain challenging to solve efficiently compared to their linear counterparts. To bridge this gap, we propose $\textbf{\emph{\texttt{SurCo}}}$ that learns linear $\underline{\text{Sur}}$rogate costs which can be used in existing $\underline{\text{Co}}$mbinatorial solvers to output good solutions to the original nonlinear combinatorial optimization problem. The surrogate costs are learned end-to-end with nonlinear loss by differentiating through the linear surrogate solver, combining the flexibility of gradient-based methods with the structure of linear combinatorial optimization. We propose three $\texttt{SurCo}$ variants: $\texttt{SurCo}-\texttt{zero}$ for individual nonlinear problems, $\texttt{SurCo}-\texttt{prior}$ for problem distributions, and $\texttt{SurCo}-\texttt{hybrid}$ to combine both distribution and problem-specific information. We give theoretical intuition motivating $\texttt{SurCo}$, and evaluate it empirically. Experiments show that $\texttt{SurCo}$ finds better solutions faster than state-of-the-art and domain expert approaches in real-world optimization problems such as embedding table sharding, inverse photonic design, and nonlinear route planning. 

\end{abstract}

\begin{figure*}
    \centering
    \includegraphics[width=0.75\textwidth]{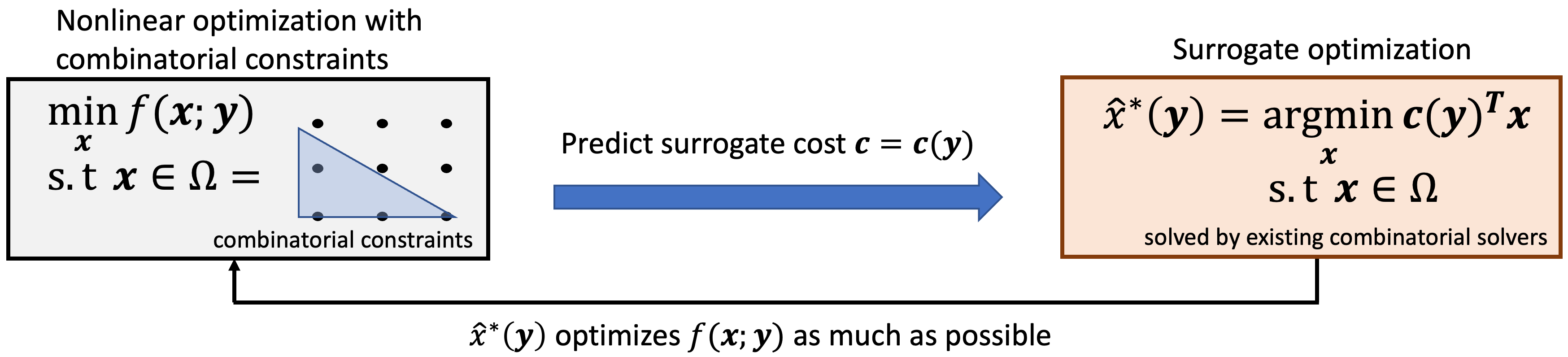}
    \caption{\small Overview of our proposed framework \ours{}.
    }
    \label{fig:overview}
\end{figure*}

\section{Introduction}

Combinatorial optimization problems with linear objective functions such as  mixed integer linear programming (MILP)~\citep{wolsey2007mixed}, and occasionally linear programming (LP)~\citep{chvatal1983linear}, have been extensively studied in operations research (OR). The resulting high-performance solvers like Gurobi~\citep{gurobi} can solve industrial-scale optimization problems with tens of thousands of variables in a few minutes. 

However, even with perfect solvers, one issue remains: the cost functions $f(\vx)$ in many practical problems are \emph{nonlinear}, and the highly-optimized solvers mainly handle linear or convex formulations
while real-world problems have less constrained objectives.
For example, in embedding table sharding~\citep{zha2022autoshard} one needs to distribute embedding tables to multiple GPUs for the deployment of recommendation systems. Due to the batching behaviors within a single GPU and communication cost among different GPUs, the overall latency (cost function) in this application depends on interactions of multiple tables and thus can be highly nonlinear~\citep{zha2022autoshard}. 

To obtain useful solutions to real-world problems, one may choose to directly optimize the nonlinear cost, which can be the black-box output of a simulator~\citep{gosavi2015simulation,ye2019automated}, or the output of a cost estimator learned by machine learning techniques (e.g., deep models) from offline data~\citep{steiner2021value,koziel2021accurate,wang2021surrogate,cozad2014learning}. However, many of these direct optimization approaches either rely on human-defined heuristics (e.g., greedy~\citep{korte1978analysis,reingold1981greedy,wolsey1982analysis}, local improvement~\citep{voss2012meta,li2021learning}), or resort to general nonlinear optimization techniques like gradient descent~\citep{ruder2016overview}, reinforcement learning~\citep{mazyavkina2021reinforcement}, or evolutionary algorithms~\citep{simon2013evolutionary}. While these approaches can work in certain settings, they may lead to a slow optimization process, in particular when the cost function is expensive to evaluate, and they often ignore the combinatorial nature of most real-world applications.

In this work, we propose a systematic framework \textbf{\emph{\ours{}}} that leverages existing efficient combinatorial solvers to find solutions to nonlinear combinatorial optimization problems arising in real-world scenarios. 
When only one nonlinear \emph{differentiable} cost $f(\vx)$ needs to be minimized, we propose \ours{}-\optver{} that optimizes a \emph{linear surrogate} cost $\hat\vc$ so that the \emph{surrogate optimizer} (SO) $\min_{\vx\in\Omega} \hat\vc^\top \vx$ outputs a solution that is expected to be optimal w.r.t. the \emph{original} nonlinear cost $f(\vx)$. Due to its linear nature, SO can be solved efficiently with existing solvers, and the surrogate cost $\hat\vc$ can be optimized in an end-to-end manner by back-propagating \emph{through} the solver via methods proposed in previous work~\citep{poganvcic2019diffbb,niepert2021imle,berthet2020diffperturb}. 

Thus, \ours{} is a general-purpose method for solving combinatorial nonlinear optimization. Off-the-shelf nonlinear optimizers are often not directly applicable to these problem domains and often require domain-specific solution methodologies to give high-quality solutions in a reasonable amount of time, and solution prediction methods fail to give combinatorially feasible solutions without problem-specific intervention. Here, learning a linear surrogate problem ensures that the surrogate solver is practically efficient, yields gradient information for offline training, and generates solutions that are combinatorially feasible.


When solving a family of nonlinear differentiable functions $f(\vx;\vy)$ parameterized by instance description $\vy$, the surrogate \emph{coefficients} $\hat\vc(\vy;\vtheta)$ are learned on a set of optimization instances (called the training set $\{\vy_i\}$), by optimizing the parameters $\vtheta$. For an unseen held-out instance $\vy'$, we propose \ours{}-\trainver{} that directly optimizes linear SO: $\hat{\vx}^*(\vy') := \arg\min_{\vx\in\Omega(\vy')} \hat\vc^\top(\vy';\vtheta) \vx$ to get the solution,  avoiding optimizing the cost $f(\vx;\vy')$ from scratch. Based on the solution predicted by \ours{}-\trainver{}, we also propose \ours{}-\hybridver{} that fine-tunes the surrogate costs $\hat\vc$ with \ours{}-\optver{} to leverage both domain knowledge synthesized offline and information about the specific instance. We provide a comprehensive description of \ours{} in Section 3.


We evaluate \ours{} in three settings: embedding table sharding \citep{zha2022autoshard}, photonic inverse design \citep{schubert_inverse_2022}, and nonlinear route planning \cite{fan2005arrivingontime}. In the on-the-fly setting, \ours{}-\optvershort{} achieves higher quality solutions in comparable or less runtime, thanks to the help of an efficient combinatorial solver. in \ours{}-\trainver{}, our method obtains better solutions in held-out problems compared to other methods that require training (e.g., reinforcement learning).

We compare \ours{} at a high level with related work integrating learning and optimization at the end of our paper. We additionally present theoretical intuition that helps motivate why training a model to predict surrogate linear coefficients may exhibit better sample complexity than previous approaches that directly predict the optimal solution~\citep{li2018combinatorial,ban2019big}.







\begin{table}
\small
\begin{tabular}{c|l}
Symbol & Description \\
\hline\hline
$\vy$ & Parametric description of a specific instance. \\
$\vx$ & A solution to an instance. \\
$f(\vx;\vy)$ & The nonlinear objective (w.r.t $\vx$) for an instance $\vy$. \\
$\Omega(\vy)$ & The feasible region of an instance $\vy$. \\
$\hat \vx^*(\vy)$ & The optimal SO solution to an instance $\vy$. \\
$\vc(\vy)$ & The surrogate coefficients for instance $\vy$.  
\end{tabular}
\caption{\small Notations used in this work.}
\end{table}

\section{Problem Specification}
Our goal is to solve the following nonlinear optimization problem describe by $\vy$: 
\begin{equation}\small
\min_\vx f(\vx; \vy) \quad\quad \mathrm{s.t.}\quad \vx \in \feas{(\vy)}
\end{equation}
where $\vx\in\rr^n$ are the $n$ variables to be optimized, $f(\vx;\vy)$ is the nonlinear differentiable cost function to be minimized, $\feas{(\vy)}$ is the feasible region, typically specified by linear (in)equalities and integer constraints, and $\vy\in Y$ are the problem instance parameters drawn from a distribution $\probdist{}$ over $Y$. For example, in the traveling salesman problem, $\vy$ can be the distance matrix among cities. 

\textbf{Differentiable cost function}. The nonlinear cost function $f(\vx;\vy)$ can either be given analytically, or the result of a simulator made differentiable via finite differencing (e.g., JAX \citep{jax2018github}). If the cost function $f(\vx;\vy)$ is not differentiable as in one of our experimental settings, we can use a cost model that is learned from an offline dataset, often generated via sampling multiple feasible solutions within $\Omega(\vy)$, and recording their costs. In this work, we assume the following property of $f(\vx;\vy)$:
\begin{assumption}[Differentiable cost function]
During optimization, the cost function $f(\vx;\vy)$ and its partial derivative $\partial f/\partial \vx$ are accessible. 
\end{assumption}
Learning a good nonlinear cost model $f$ is non-trivial for practical applications (e.g., AlphaFold~\citep{jumper2021highly}, Density Functional Theory~\citep{nagai2020completing}, cost model for embedding tables~\citep{zha2022autoshard}) and is beyond the scope of this work.  

\textbf{Evaluation Metric}. We mainly focus on two aspects: the solution quality evaluated by $f(\hat \vx; \vy)$, and the number of queries of $f$ during optimization to achieve the solution $\hat\vx$. For both, smaller measurements are favorable, i.e., fewer query of $f$ to get solutions closer to global optimum. 

When $f(\vx; \vy)$ is linear w.r.t $\vx$, and the feasible region $\Omega(\vy)$ can be encoded using mixed integer programs or other
mathematical programs, the problem can be solved efficiently using existing scalable optimization solvers. When $f(\vx;\vy)$ is nonlinear, we propose \ours{} that learns a surrogate
linear objective function, which allow us to leverage these existing scalable optimization solvers, and which results in a solution that has high quality with respect to the original hard-to-encode objective function $f(\vx;\vy)$. 

\begin{table*}
\scriptsize
\centering
\begin{tabular}{c|ccccc}
\multirow{2}{*}{Methods} & Applicable to & Objective can be & \multirow{2}{*}{Training Set} & Generalize to & Built-in handling of  \\
& nonlinear objective & free form & & unseen instances & combinatorial constraints\\ 
\hline
\hline
Gradient Descent & Yes & Yes & N/A & No & No \\
Evolutionary Algorithm & Yes & Yes & N/A & No & No \\
Nonlinear combinatorial solvers & Yes & No & N/A & No & Yes \\ 
Learning direct mapping & Yes & Yes & $\{\vy_i, \vx^*_i\}$ & Yes & No \\
Predict-then-optimize   & Limited & No & $\{\vy_i, \vx^*_i\}$ & Yes & Yes \\
\hline
SurCo (proposed)   & Yes & Yes & $\{\vy_i\}$ & Yes & Yes 
\end{tabular}
\caption{\small Conceptual comparison of optimizers (both traditional and ML-guided). Our approach (SurCo) can handle nonlinear objective without a predefined analytical form, does not require pre-computed optimal solutions in its training set, can handle combinatorial constraints (via commercial solvers it incorporates), and can generalize to unseen instances.} 
\end{table*}

\section{\ours: Learning Linear Surrogates}
\subsection{\ours{}-\optver{}: on-the-fly optimization}

We start from the simplest case in which we focus on a single  instance with $f(\vx) = f(\vx;\vy)$ and $\Omega = \Omega(\vy)$. \ours{}-\optvershort{} aims to optimize the following objective:
\begin{equation}\small
\text{(\ours{}-\optvershort)}: \quad \min_{\vc} \cL_\optvershort(\vc) := f(\vg_\Omega(\vc)) \label{eq:ours-optver}
\end{equation}
where the surrogate optimizer $\vg_\Omega: \rr^n \mapsto \rr^n$ is the output of certain combinatorial solvers with linear cost weight $\vc\in \rr^n$ and feasible region $\Omega \subseteq \rr^n$. For example, $\vg_\Omega$ can be the following: 
\begin{equation}
\small
    \vg_\Omega(\vc) := \arg\min_\vx \vc^\top \vx \quad \mathrm{s.t.}\ \ \vx\in \Omega := \{A\vx \le \vb, \vx \in \zz^n\} 
\end{equation}
which is the output of a MILP solver. Thanks to previous works~\citep{ferber2020mipaal, poganvcic2019diffbb}, we can efficiently compute the partial derivative $\partial \vg_\Omega(\vc) / \partial \vc$. Intuitively, this means that $\vg_\Omega(\vc)$ can be \emph{backpropagated} through. 
Since $f$ is also differentiable with respect to the solution it is evaluating, we thus can optimize Eqn.~\ref{eq:ours-optver} in an end-to-end manner using any gradient-based optimizer:
\begin{equation}\small
\vc(t+1) = \vc(t) - \alpha \frac{\partial \vg_\Omega}{\partial \vc} \frac{\partial f}{\partial \vx}, 
\end{equation}
where $\alpha$ is the learning rate. The procedure starts from a randomly initialized $\vc(0)$ and converges at a local optimal solution of $\vc$. 
While Eqn.~\ref{eq:ours-optver} is still nonlinear optimization and there is no guarantee about the quality of the final solution $\vc$, we argue that optimizing Eqn.~\ref{eq:ours-optver} is better than optimizing the original nonlinear cost $\min_{\vx\in\Omega} f(\vx)$. Furthermore, while we cannot guarantee optimality, we  guarantee feasibility by leveraging a linear combinatorial solver. 

Intuitively, instead of optimizing directly over the solution space $\vx$, we optimize over the space of surrogate costs $\vc$, and delegate the combinatorial feasibility requirements of the nonlinear problem to SoTA combinatorial solvers. Compared to naive approaches that directly optimize $f(\vx)$ via general optimization techniques, our method readily handles complex constraints of the feasible regions, and thus makes the optimization procedure easier. Furthermore, it also helps escape from local minima, thanks to the embedded search component of existing combinatorial solvers (e.g., branch-and-bound~\citep{land2010automatic} in MILP solvers). As we see in the experiments, this is particularly important when the problem becomes large-scale with more local optima. This approach works well when we are optimizing individual instances and may not have access to offline training data or the training time is cost-prohibitive. 

\textbf{Limitation.} Note that due to linear surrogate, our approach will always return a vertex in the feasible region, while the solution to the original nonlinear objective may be in the interior. We leave this limitation for future work. In many real-world settings, such as in the three domains we tested, the solutions are indeed on the vertices of feasible regions.  




\subsection{\ours{}-\trainver{}: offline surrogate training}

We now consider a more general case where we have $N$ optimization instances, each parameterized by an instance description $\vy_i$, $i = 1\ldots N$, and we want to find their solutions to a \emph{collection} of nonlinear loss functions $f(\vx;\vy_i)$ simultaneously. Here we write $\cD_\train := \{\vy_i\}_{i=1}^N$ as the training set.  
A naive approach is just to apply \ours{}-\optver{} $N$ times, which leads to $N$ independent surrogate costs $\{\vc_i\}_{i=1}^N$. However, this approach does not consider two important characteristics. 
\emph{First}, it fails to leverage possible relationship between the instance descriptor $\vy_i$ and its associated surrogate cost $\vc_i$, since every surrogate cost is independently estimated. \emph{Second}, it fails to learn any useful knowledge from the $N$ instances after optimization. As a result, for an unseen instance, the entire optimization process needs to be conducted again, which is slow. 
This motivates us to add a surrogate cost \emph{model} $\hat\vc(\vy;\vtheta)$ into the optimization as a regularizer: 
\begin{eqnarray}\small
    \text{(\ours{}-\trainvershort-$\lambda$)}: \quad \min_{\vtheta, \{\vc_i\}} \cL_\trainvershort(\vtheta,\{\vc_i\};\lambda) \nonumber \\
    := \sum_{i=1}^N f(\vg_{\Omega(\vy_i)}(\vc_i);\vy_i) + \lambda \|\vc_i - \hat\vc(\vy_i;\vtheta))\|_2 
    \label{eq:ours-trainver-lambda}
\end{eqnarray}
The regressor model $\hat\vc(\vy;\vtheta)$ directly predicts the surrogate cost from the instance description. The form of the regressor can be a neural network, in which $\vtheta$ is its parameters. Note that when $\lambda = 0$, it reduces to $N$ independent optimizations, while when $\lambda > 0$, the surrogate costs $\{\vc_i\}$ interact with each other. 
With the regressor, we distill knowledge gained from the optimization procedure into $\vtheta$, which can be used for an unseen instance $\vy'$. Indeed, we use the learned regressor model to predict the surrogate cost $\vc' = \hat\vc(\vy';\vtheta)$, and directly solve the \emph{surrogate optimization} (SO): 
\begin{equation}\small
\hat \vx^*(\vy') = \arg\min_{\vx\in\Omega(\vy)} \hat\vc^\top(\vy';\vtheta) \vx
\end{equation}
A special case is when $\lambda\rightarrow +\infty$, we directly learn the network parameters $\vtheta$ instead of individual surrogate costs:
\begin{eqnarray}\small
    \text{(\ours{}-\trainvershort)}: \quad \min_{\vtheta} \cL_\trainvershort(\vtheta) \nonumber \\
    := \sum_{i=1}^N f(\vg_{\Omega(\vy_i)}(\hat\vc(\vy_i;\vtheta));\vy_i) \label{eq:ours-trainver}
\end{eqnarray}
This approach is useful when the goal is to find high-quality solutions for unseen instances of a problem distribution when the upfront cost of offline training is acceptable but the cost of optimizing on-the-fly is prohibitive. Here, we require access to a distribution of training optimization problems, but at test time only require the feasible region and not the nonlinear objective. Different from predict-then-optimize~\cite{elmachtoub2022smart,ferber2020mipaal} or ML optimizers~\cite{ban2019big}, we do not require the optimal solution $\{\vx^*_i\}_{i=1}^N$ as part of the training set.


\subsection{\ours{}-\hybridvershort{}: fine-tuning a predicted surrogate}
Naturally, we consider \ours{}-\hybridvershort{}, a hybrid approach which initializes the coefficients of \ours{}-\optvershort{} with the coefficients predicted from \ours{}-\trainvershort{} which was trained on offline data. This allows \ours{}-\hybridvershort{} to start out optimization from an initial prediction that has good performance for the distribution at large but which is then fine-tuned for the specific instance. Formally, we initialize $\vc(0) = \hat\vc(\vy_i;\vtheta)$ and then continue optimizing $\vc$ based on the update from \ours{}-\optvershort{}. This approach is geared towards optimizing the nonlinear objective using a high-quality initial prediction that is based on the problem distribution and then fine-tuning the objective coefficients based on the specific problem instance at test time. Here, high performance comes at the runtime cost of both having to train offline on a problem distribution as well as performing fine-tuning steps on-the-fly. However, this additional cost is often worthwhile when the main goal is to find the best possible solutions by leveraging synthesized domain knowledge in combination with individual problem instances as arises in chip design \citep{mirhoseini2021chipdesign} and compiler optimization \citep{zhou2020compiler}. 

\section{Is Predicting Surrogate Cost better than Predicting Solution? A Theoretical Analysis}

One of the key ingredient of our proposed methods (\ours{}-\trainvershort{} and \ours{}-\hybridvershort{}) is to learn a model to predict surrogate cost $\vc$ from instance description $\vy$, which is in contrast with previous solution regression approaches that directly learn a mapping from problem description $\vy$ to the solution $\vx^*(\vy)$ ~\citep{ban2019big}. A natural question arise: which one is better? 

In this section, we give theoretical intuition to compare the two approaches using a simple 1-nearest-neighbor (1-NN) solution regressor \citep{fix19851nn}. We first relate the number of samples needed to learn any mapping to its \emph{Lipschitz constant} $L$, and then show that for the direct mapping $\vy\mapsto \vx^*(\vy)$, $L$ can be very large. Therefore, there exist fundamental difficulties to learn such a mapping. When this happens, we can still find surrogate cost mapping $\vy\mapsto \vc^*(\vy)$ with finite $L$ that leads to the optimal solution $\vx^*(\vy)$ of the original nonlinear problems. 

\subsection{Lipschitz constant and sample complexity}
\label{sec:lipschitz-sample-complexity}

\def\cC{\mathcal{C}}

Formally, consider fitting any mapping $\vphi: \rr^d \supseteq Y \mapsto \rr^m$ with a dataset $\cC := \{\vy_i, \vphi_i\}$. Here $Y$ is a compact region with finite volume $\vol(Y)$.  The Lipschitz constant $L$ is the smallest number so that $\|\vphi(\vy_1) - \vphi(\vy_2)\|_2 \le L\|\vy_1-\vy_2\|_2$ holds for any $\vy_1,\vy_2\in Y$. The following theorem shows that if the dataset covers the space $Y$, we could achieve high accuracy prediction: $\|\vphi(\vy) - \hat\vphi(\vy)\|_2 \le \epsilon$ for any $\vy\in Y$.

\begin{definition}[$\delta$-cover]
A dataset $\cC:=\{(\vy_i, \vphi_i)\}_{i=1}^N$ $\delta$-covers the space $Y$, if for any $\vy\in Y$, there exists at least one $\vy_i$ so that $\|\vy-\vy_i\|_2 \le \delta$. 
\end{definition}

\def\nn{\mathrm{nn}}

\begin{restatable}[Sufficient condition of prediction with $\epsilon$-accuracy]{lemma}{suffcondition}
\label{lemma::sample_complexity}
If the dataset $\cC$ can $(\epsilon/L)$-cover $Y$, then for any $\vy \in Y$, a 1-nearest-neighbor regressor $\hat\vphi$ leads to $\|\hat\vphi(\vy) - \vphi(\vy)\|_2 \le \epsilon$.
\end{restatable}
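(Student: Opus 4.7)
The statement is essentially a direct application of the Lipschitz property combined with the definition of the cover, so I expect the proof to be short and almost entirely mechanical; the main task is to line up the definitions correctly.

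The plan is as follows. First, I would fix an arbitrary query point $\vy \in Y$ and invoke the $(\epsilon/L)$-cover hypothesis to produce a dataset point $\vy_j \in \cC$ with $\|\vy - \vy_j\|_2 \le \epsilon/L$. Second, I would recall that the 1-nearest-neighbor regressor is defined by $\hat\vphi(\vy) = \vphi_{\nn(\vy)}$, where $\nn(\vy) := \arg\min_i \|\vy - \vy_i\|_2$. By the minimality of $\nn(\vy)$, we have
\[
\|\vy - \vy_{\nn(\vy)}\|_2 \;\le\; \|\vy - \vy_j\|_2 \;\le\; \epsilon/L.
\]
Third, I would apply the Lipschitz property of $\vphi$ (with constant $L$) to the pair $(\vy, \vy_{\nn(\vy)})$ to get
\[
\|\hat\vphi(\vy) - \vphi(\vy)\|_2 \;=\; \|\vphi(\vy_{\nn(\vy)}) - \vphi(\vy)\|_2 \;\le\; L\,\|\vy_{\nn(\vy)} - \vy\|_2 \;\le\; L \cdot \frac{\epsilon}{L} \;=\; \epsilon.
\]
Since $\vy \in Y$ was arbitrary, the claim follows.

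There is no real obstacle in this argument; the only subtlety is the convention for the 1-NN regressor (it returns the $\vphi$-value of the closest training point, rather than interpolating), and ensuring the reader sees that the cover point $\vy_j$ is only used as a \emph{witness} certifying that the true nearest neighbor $\vy_{\nn(\vy)}$ is at most $\epsilon/L$ away. With that clarified, the chain ``$(\epsilon/L)$-cover $\Rightarrow$ nearest neighbor within $\epsilon/L$ $\Rightarrow$ prediction error within $\epsilon$'' is two lines and finishes the lemma.
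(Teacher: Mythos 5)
Your proposal is correct and follows essentially the same argument as the paper: use the $(\epsilon/L)$-cover to exhibit a witness point within $\epsilon/L$, note that the true nearest neighbor is at least as close, and then apply the Lipschitz bound to conclude $\|\hat\vphi(\vy)-\vphi(\vy)\|_2\le\epsilon$. No differences worth noting.
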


\begin{restatable}[Lower bound of sample complexity for $\epsilon/L$-cover]{lemma}{lowerboundcomplexity}
To achieve $\epsilon/L$-cover of $Y$, the size of the dataset set $N \ge N_0(\epsilon) := \frac{\vol(Y)}{\vol_0}\left(\frac{L}{\epsilon}\right)^d$, where $\vol_0$ is the volume of unit ball in $d$-dimension.
\end{restatable}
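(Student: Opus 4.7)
The claim is a standard volume-based covering lower bound, so my strategy is to translate the definition of $(\epsilon/L)$-cover directly into a geometric statement about balls and then apply a simple volume inequality. Set $\delta := \epsilon/L$. By the definition of $\delta$-cover, every point $\vy \in Y$ lies within Euclidean distance $\delta$ of at least one sample $\vy_i$; equivalently, the union of the closed balls $B(\vy_i, \delta) := \{\vy : \|\vy - \vy_i\|_2 \le \delta\}$ for $i = 1, \ldots, N$ contains $Y$.

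\textbf{Key steps.} First, I would state the covering inclusion $Y \subseteq \bigcup_{i=1}^N B(\vy_i, \delta)$. Second, apply monotonicity and subadditivity of Lebesgue measure to obtain
\begin{equation*}
\vol(Y) \;\le\; \sum_{i=1}^N \vol\bigl(B(\vy_i, \delta)\bigr) \;=\; N \cdot \vol_0 \cdot \delta^d,
\end{equation*}
using that a Euclidean ball of radius $\delta$ in $\rr^d$ has volume $\vol_0 \cdot \delta^d$, where $\vol_0$ is the volume of the unit ball. Third, substitute $\delta = \epsilon/L$ and rearrange to get
\begin{equation*}
N \;\ge\; \frac{\vol(Y)}{\vol_0 \, \delta^d} \;=\; \frac{\vol(Y)}{\vol_0}\left(\frac{L}{\epsilon}\right)^d,
\end{equation*}
which is exactly $N_0(\epsilon)$.

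\textbf{Anticipated obstacles.} There is essentially no hard step; the only subtlety to flag is ensuring $Y$ is measurable (the excerpt assumes $Y$ is a compact region with finite volume $\vol(Y)$, so Lebesgue measurability is fine) and that the balls used in the cover are those in the ambient Euclidean metric on $\rr^d$ rather than relative to $Y$ itself — otherwise the volume bound $\vol_0 \delta^d$ could be replaced by $\vol(B(\vy_i,\delta)\cap Y)$ and the inequality still goes through in the same direction. I would briefly note this to keep the argument tight, but no calculation beyond the one-line volume inequality is needed.
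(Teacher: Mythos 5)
Your proof is correct and uses the same volume-counting argument as the paper: bound the union of $N$ balls of radius $\epsilon/L$ by $N\vol_0(\epsilon/L)^d$ and compare against $\vol(Y)$. The only difference is presentational — the paper argues by contradiction (assuming $N < N_0(\epsilon)$ and exhibiting an uncovered point) while you argue the contrapositive directly, which is if anything slightly cleaner.
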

Please find all proofs in the Appendix. While we do not rule out a more advanced regressor than 1-nearest-neighbor that could lead to better sample complexity, the lemmas demonstrate that the Lipschitz constant $L$ plays an important role in sample complexity.

\begin{figure*}[ht!]
\sbox\twosubbox{%
    \includegraphics[height=4.5cm]{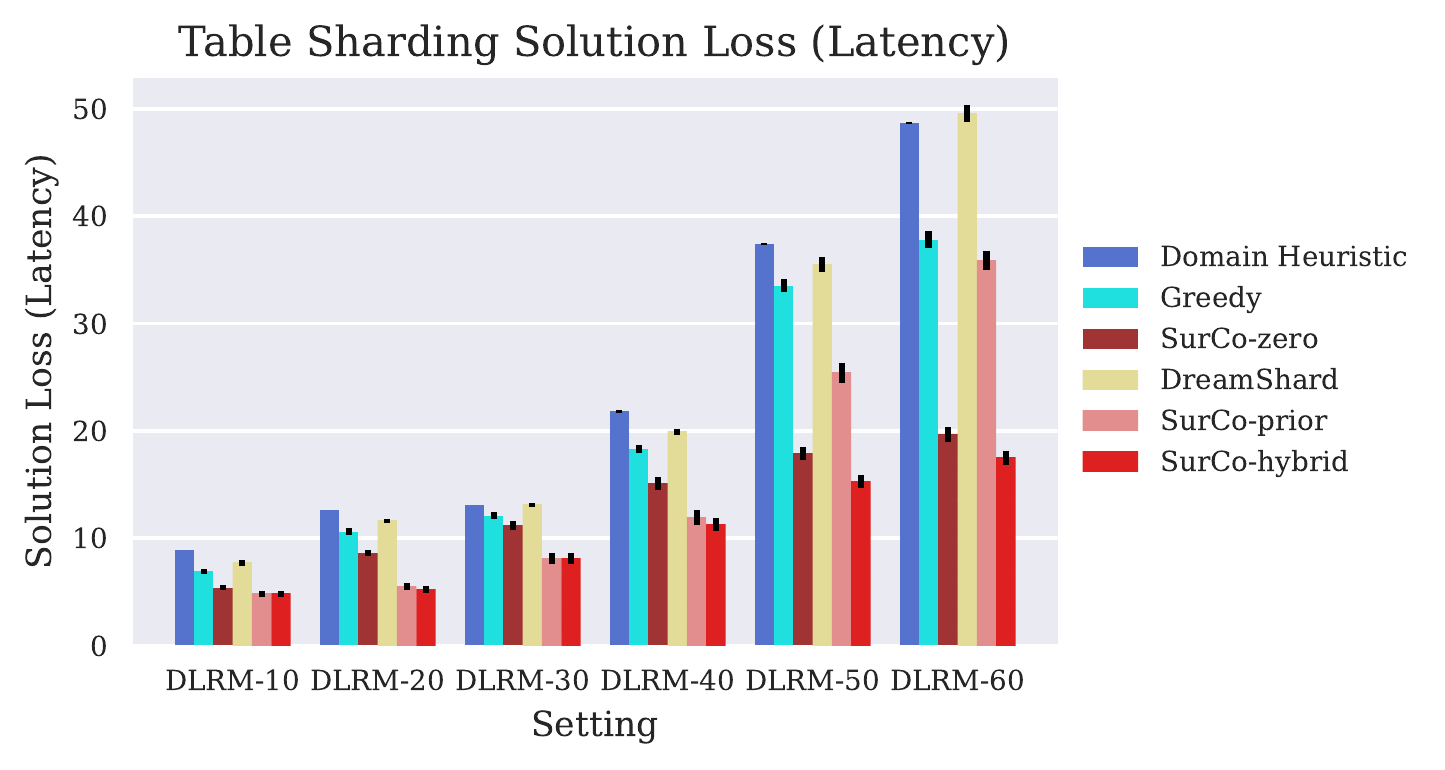}%
    \includegraphics[height=4.5cm]{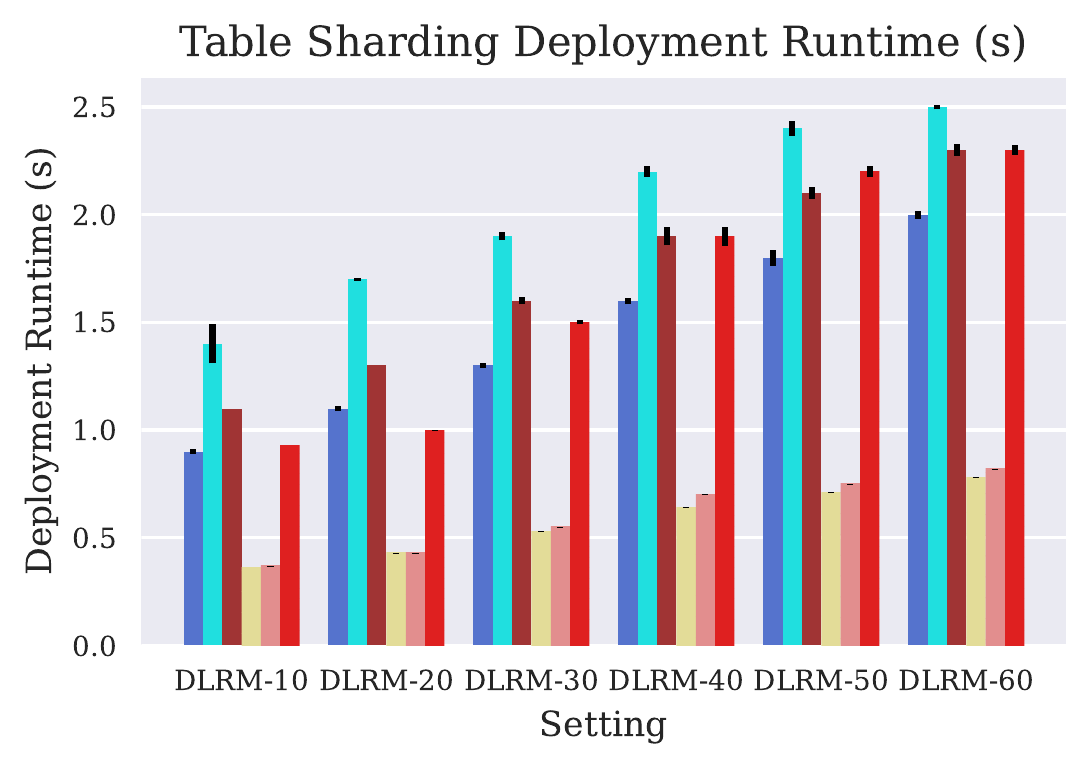}%
}
\setlength{\twosubht}{\ht\twosubbox}
\centering
\includegraphics[height=\twosubht]{plots/table_placement_quality.pdf}%
\includegraphics[height=\twosubht]{plots/table_placement_runtime.pdf}%

\caption{\small Table placement plan latency \small (\textbf{left}) and solver runtime \small (\textbf{right}). We evaluate \ours{} against Dreamshard \citep{zha2022dreamshard}, a SoTA offline RL sharding tool, a domain-heuristic of assigning tables based on dimension, and a greedy heuristic based on the estimated runtime increase. Striped approaches require pre-training.}
\label{fig:table_results}
\end{figure*}

\subsection{Difference between Cost and Solution Regression}
In the following we will show that in certain cases, the direct prediction $\vy \mapsto \vx^*(\vy)$ could have an infinitely large Lipschitz constant $L$.  
To show this, let us consider a general mapping $\vphi: \rr^d \supseteq Y \mapsto \rr^m$. Let $\vphi(Y)$ be the image of $Y$ under mapping $\vphi$ and $\kappa(Y)$ be the number of connected components for region $Y$. 
\begin{restatable}[A case of infinite Lipschitz constant]{theorem}{infinitelip}
\label{thm:cc-argument}
If the minimal distance $d_{\min}$ for different connected components of $\vphi(Y)$ is strictly positive, and $\kappa(\vphi(Y)) > \kappa(Y)$, then the Lipschitz constant of the mapping $\vphi$ is infinite. 
\end{restatable}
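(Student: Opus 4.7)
The plan is to prove the contrapositive: I would show that if $\vphi$ is $L$-Lipschitz with $L < \infty$, then $\kappa(\vphi(Y)) \le \kappa(Y)$, which contradicts the hypothesis and forces the Lipschitz constant to be infinite. The intuition is the classical fact that a continuous map sends connected sets into connected sets, and the positive separation $d_{\min}$ between image components is exactly what lets me upgrade ``continuous'' to a quantitative statement involving a finite Lipschitz constant.

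Concretely, I would first write $Y = \bigcup_i Y_i$ as its decomposition into connected components and fix one $Y_i$. For each connected component $C$ of $\vphi(Y)$, define $Y_i^C := \{\vy \in Y_i : \vphi(\vy) \in C\}$; these sets partition $Y_i$. The key step, which I expect to be the only real work in the proof, is to show that each $Y_i^C$ is relatively open in $Y_i$. Here the assumption $d_{\min} > 0$ enters directly: for $\vy \in Y_i^C$ and any $\vy' \in Y_i$ with $\|\vy - \vy'\|_2 < d_{\min}/L$, the Lipschitz bound yields $\|\vphi(\vy') - \vphi(\vy)\|_2 < d_{\min}$, which rules out $\vphi(\vy')$ lying in any component other than $C$ (since every other component is at distance at least $d_{\min}$ from $C$). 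Hence $\vy' \in Y_i^C$, establishing relative openness.

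Once the $Y_i^C$ form a partition of $Y_i$ into relatively open sets, connectedness of $Y_i$ forces all but one of them to be empty, so $\vphi(Y_i)$ is contained in a single component of $\vphi(Y)$. Summing over $i$ yields $\kappa(\vphi(Y)) \le \kappa(Y)$, contradicting the hypothesis. The main subtlety I anticipate is simply being careful about the relative topology — ``open in $Y_i$'' rather than ``open in $\rr^d$'' — and handling the case where $\vphi(Y)$ has infinitely many components (still fine, because whenever $\kappa(\vphi(Y)) > \kappa(Y)$ at least one $Y_i$ must meet at least two image components, and the openness argument is local and independent of the total component count). Everything else reduces to the standard continuity-plus-connectedness pigeonhole, with $d_{\min}$ playing the explicit role of the gap that no finite-Lipschitz map can straddle on arbitrarily small domain scales.
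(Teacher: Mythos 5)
Your proposal is correct, and it takes a genuinely different route from the paper. The paper argues directly: it uses the pigeonhole principle to find one connected component $Y_j$ of $Y$ that meets the preimages of two distinct components $R_k, R_{k'}$ of $\vphi(Y)$, joins two such points by a continuous path $\gamma$ inside $Y_j$, and looks at the supremum $t_0$ of times at which $\vphi(\gamma(t))\in R_k$; just past $t_0$ the image must jump by at least $d_{\min}$ while the domain points $\gamma(t'),\gamma(t'')$ are within $O(\epsilon)$ of each other, which drives the difference quotient to infinity. You instead prove the contrapositive: a finite Lipschitz constant $L$ makes each set $Y_i^C=\{\vy\in Y_i:\vphi(\vy)\in C\}$ relatively open (any $\vy'$ within $d_{\min}/L$ of $\vy\in Y_i^C$ has $\|\vphi(\vy')-\vphi(\vy)\|_2<d_{\min}$ and so cannot land in another component), and connectedness of $Y_i$ then forces $\vphi(Y_i)$ into a single component, giving $\kappa(\vphi(Y))\le\kappa(Y)$. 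Your version is the more robust of the two: the paper's proof implicitly upgrades connectedness to path-connectedness and asserts a local Lipschitz bound $\|\gamma(t')-\gamma(t'')\|_2\le C(t_0)|t'-t''|$ for the path, neither of which follows from continuity alone for a general compact $Y$; your openness argument needs only connectedness and the gap $d_{\min}$, and as you note it handles infinitely many image components without extra work. What the paper's approach buys in exchange is constructiveness --- it exhibits explicit pairs of nearby domain points realizing arbitrarily large difference quotients, localizing where the blow-up occurs --- which is rhetorically useful for the surrounding discussion but not needed for the statement itself.
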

Note that this theorem applies to a wide variety of combinatorial optimization problems.
For example, when $Y$ is a connected region and the optimization problem can be formulated as an integer programming, the optimal solution set $\vx^*(Y) := \{\vx^*(\vy): \vy\in Y\}$ is a discrete set of integral vertices, 
so the theorem applies. 
Combined with analysis in Sec.~\ref{sec:lipschitz-sample-complexity}, we know the mapping $\vy \mapsto \vx^*(\vy)$ is hard to learn even with a lot of samples.  

We can see this more clearly with a concrete example in 2D space. Let the 1D instance description $y \in [0, \pi/2]$, and the feasible region is a convex hull of 3 vertices $\{(0, 0), (0, 1), (1, 0)\}$. The nonlinear objective is simply $f(\vx;y) := (x_1 \cos(y) + x_2 \sin(y))^2$, in which $\vx = (x_1,x_2)$ is the 2D solution vector. The direct mapping $y\rightarrow \vx^*$ maps a continuous region of instance descriptions (i.e., $y\in [0,\pi/2]$) into 2 disjoint regions points ($\vx^*=(0,1)$ and $\vx^*=(1,0)$), and thus according to Theorem~\ref{thm:cc-argument}, its Lipschitz constant must be infinite. In contrast, there exists a surrogate cost mapping $\vc(y) = [\cos(y), \sin(y)]^\top$, and the mapping $y\rightarrow \vc$ has finite Lipschitz constant (actually $L\le 1$) and can be learned easily.  

\section{Empirical Evaluation}

We evaluate the variants of \ours{} on three settings, embedding table sharding, inverse photonic design, and nonlinear route planning, with the first two being real-world industrial settings. Each setting consists of a family of problem instances with varying feasible region and nonlinear objective function. Additionally, both table sharding and inverse photonic design lack analytical formulations of the objective function which prevents them from being used by many off-the-shelf nonlinear solvers like SCIP \citep{achterberg2009scip}.

\begin{figure*}[ht!]
\sbox\twosubbox{%
    \includegraphics[height=4.5cm]{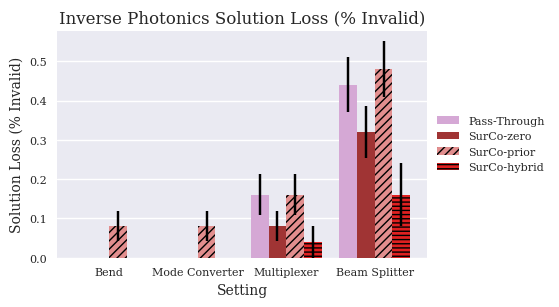}%
    \includegraphics[height=4.5cm]{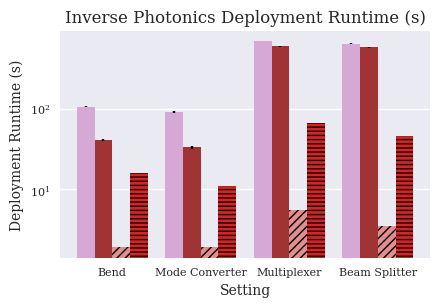}%
}
\setlength{\twosubht}{\ht\twosubbox}

\centering

\includegraphics[height=\twosubht]{plots/inverse_photonics_quality.png}%
\includegraphics[height=\twosubht]{plots/inverse_photonics_runtime.png}%

\caption{\small \textbf{Left} The solution loss (\% of failed instances when the design loss is not 0), and \textbf{right} test time solver runtime in log scale. For both, lower is better. We compare against the Pass-Through gradient approach proposed in \cite{schubert_inverse_2022}. We observe that \ours{}-\trainvershort{} achieves similar success rates to the previous approach \texttt{Pass-Through} with a substantially improved runtime. Additionally, \ours{}-\optvershort{} runs comparably or faster, while finding more valid solutions than \texttt{Pass-Through}. \ours{}-\hybridver{} obtains valid solutions most often and is faster than \ours{}-\optvershort{} at the expense of pretraining. Striped approaches use pretraining.}
\label{fig:inv_photo_results}
\end{figure*}
\subsection{Embedding Table Sharding}

The task of sharding embedding tables arises in the deployment of large-scale neural network models which operate over both sparse and dense inputs (e.g., in recommendation systems \citep{zha2022autoshard, zha2022dreamshard, zha2023pre, sethi2022recshard}).
Given $T$ embedding tables and $D$ homogeneous devices, the goal is to distribute the tables among the devices such that no device's memory limit is exceeded, while the tables are processed efficiently. Formally, let $x_{t,d}$ be the binary variable indicating whether table $t$ is assigned to device $d$, and $\vx := \{x_{t,d}\}\in \{0,1\}^{TD}$ be the collection of the variables. The optimization problem is $\min_{\vx\in\Omega} f(\vx;\vy)$ where $\Omega(\vy) := \left\{\vx: \forall t, \sum_t x_{t,d} = 1, \forall d, \sum_t m_t x_{t,d} \le M\right\}$. 

Here the problem description $\vy$ includes table memory usage $\{m_t\}$, and capacity $M$ of each device. $\sum_d x_{t,d} = 1$ means each table $t$ should be assigned to exactly one device, and $\sum_d m_t x_{t,d} \le M$ means the memory consumption at each device $d$ should not exceed its capacity. The nonlinear cost function $f(\vx;\vy)$ is the \emph{latency}, i.e., the runtime of the longest-running device. Due to shared computation (e.g., batching) among the group of assigned tables, and communication costs across devices, the objective is highly nonlinear. $f(\vx;\vy)$ is well-approximated by a sharding plan runtime estimator proposed by Dreamshard \citep{zha2022dreamshard}. Note that here, the runtime is approximated by a differentiable function since the real world deployment runtime isn't differentiable.

\ours{} learns to predict $T\times D$ surrogate cost $\surrobjlin{}_{t,d}$, one for each potential table-device assignment. During training, the gradients through the combinatorial solver $\partial\vg/\partial\vc$ are computed via CVXPYLayers \citep{agrawal2019cvxpylayers}, and the integrality constraints are relaxed. In practice, we obtained mostly integral solutions in that only one table on any given device was fractional. At test time, we solve for the integer solution using SCIP~\citep{achterberg2009scip}, a branch and bound MILP solver.   

\textbf{Settings.} We evaluate \ours{} on the public Deep Learning Recommendation Model (DLRM) dataset~\citep{DLRM19}. We consider 6 settings placing 10, 20, 30, 40, 50, and 60 tables on 4 devices, with a 5GB memory limit on GPU devices and 100 instances each (50 train, 50 test). 

\textbf{Baselines.} For impromptu baselines, \baselinelookahead{} allocates tables to devices based on predicted latency increase $f$, and the domain-expert algorithm \baselinedomain{} balances the aggregate dimension~\citep{zha2022dreamshard}. For \ours{}-\trainver{}, we use Dreamshard, the SoTA embedding table sharding algorithm using offline RL.

\textbf{Results.} Fig. \ref{fig:table_results}, \ours{}-\optvershort{} finds lower latency sharding plans than the baselines, while it takes slightly longer than \baselinedomain{} and DreamShard due to taking optimization steps rather than building a solution from a heuristic feature or reinforcement learned policy. \ours{}-\trainvershort{} obtains lower latency solutions in about the same time as DreamShard with a slight runtime increase from SCIP. Lastly, \ours{}-\hybridvershort{} obtains the best solutions and has runtime comparable to \ours{}-\optvershort{}.
In smaller instances ($T\leq40$), \ours{}-\trainvershort{} finds better solutions than its impromptu counterpart, \ours{}-\optvershort{}, likely by escaping local optima by training on a variety of examples.
For larger instances with more tables available for placement, \ours{}-\optvershort{} performs better by optimizing for the test instances as opposed to \ours{}-\trainvershort{} which only uses training data. Using \ours{}-\hybridvershort{}, we obtain the best solutions but incur the upfront pretraining cost and the deployment-time optimization cost.
%




\subsection{Inverse Photonic Design}

\begin{figure*}[ht!]
\centering
\includegraphics[width=\textwidth]{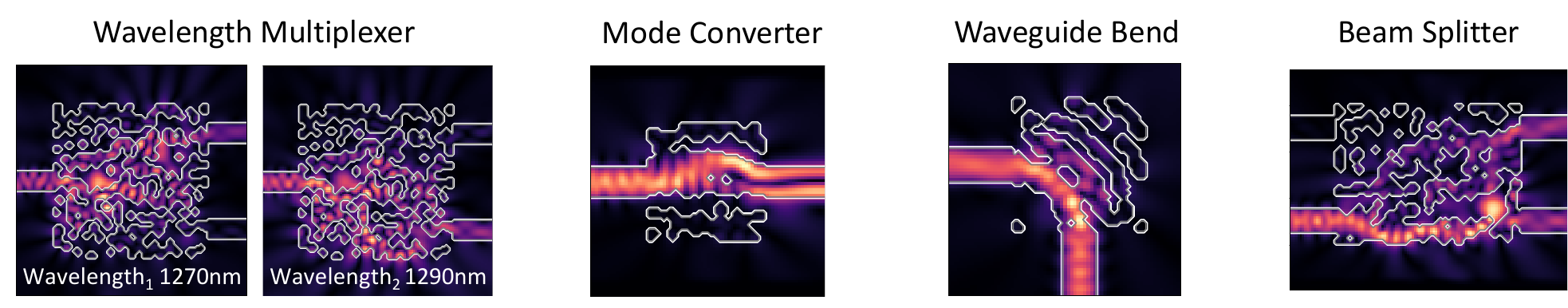}
\caption{\small Inverse photonic design settings from the ceviche challenges \cite{schubert_inverse_2022} along with \ours{}-\optvershort{} solution designs and wavelength intensities. Light is fed in on the left and is routed at desired intensities to the output by designing the intermediate region. In the Wavelength Multiplexer setting, two wavelengths of interest are visualized as they are routed to different locations.}
\label{fig:inv_photo_settings}
\end{figure*}





Photonic devices play an essential role in high-speed communication \citep{marpaung2019microwave}, quantum computing \citep{arrazola2021quantum}, and machine learning hardware acceleration \citep{wetzstein2020photomlaccel}. The photonic components can be encoded as a binary 2D grid, with each cell being filled or void. There are constraints on which binary patterns are physically manufacturable: only those that can be drawn by a physical brush instrument with a specific cross shape can be manufactured.
It remains challenging to find manufacturable designs that satisfy design specifications like splitting beams of light. An example solution developed by \ours{} is shown in Figure \ref{fig:inv_photo_solution}: coming from the top, beams are routed to the left or right, depending on wavelength. The solution is also manufacturable: a 3-by-3 brush cross can fit in all filled and void space. 
Given the design, existing work~\citep{hughes2019ceviche} enables differentiation of the design misspecification cost, evaluated as how far off the transmission intensity of the wavelengths are from the desired output locations, with zero design loss meaning that the specification is satisfied. Researchers also develop the Ceviche Challenges~\citep{schubert_inverse_2022} a standard benchmark of inverse photonic design problems. Formally, a feasible design is a rectangle of pixels which are either filled or void where both the filled and void pixels can be expressed as a unions of the brush shape. Please see ~\citep{schubert_inverse_2022} for an in depth description of the nonlinear objective and feasible region.

\begin{figure*}[ht!]
\sbox\twosubbox{%
    \includegraphics[height=4.1cm]{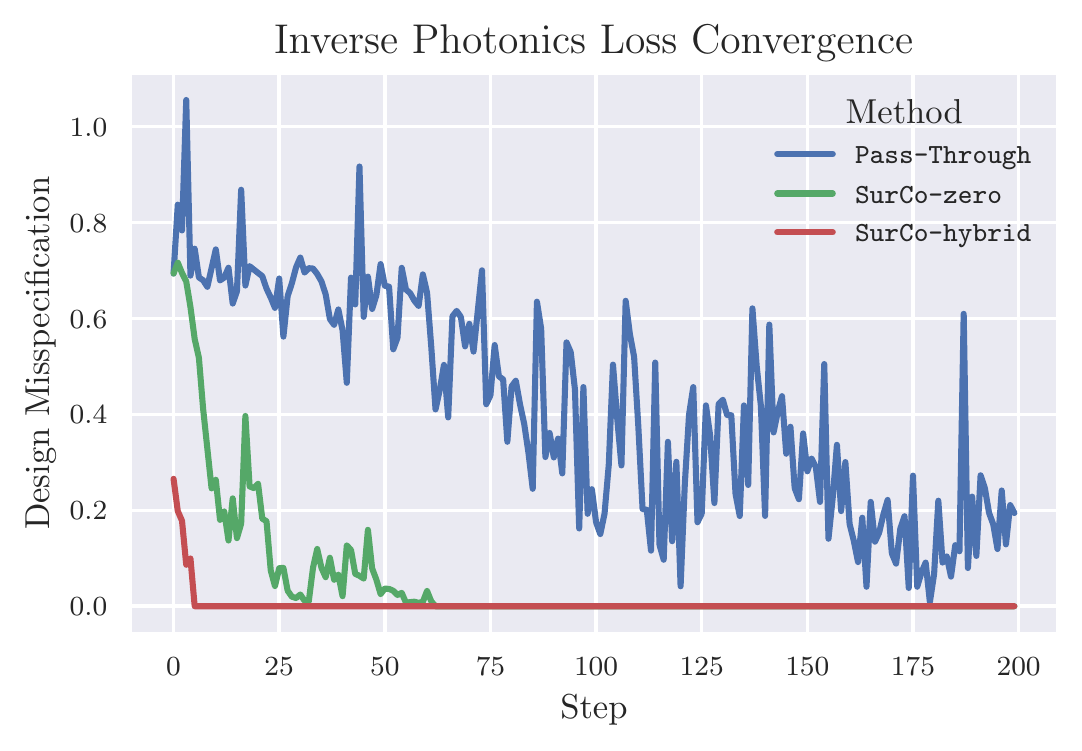}%
    \includegraphics[height=4.1cm]{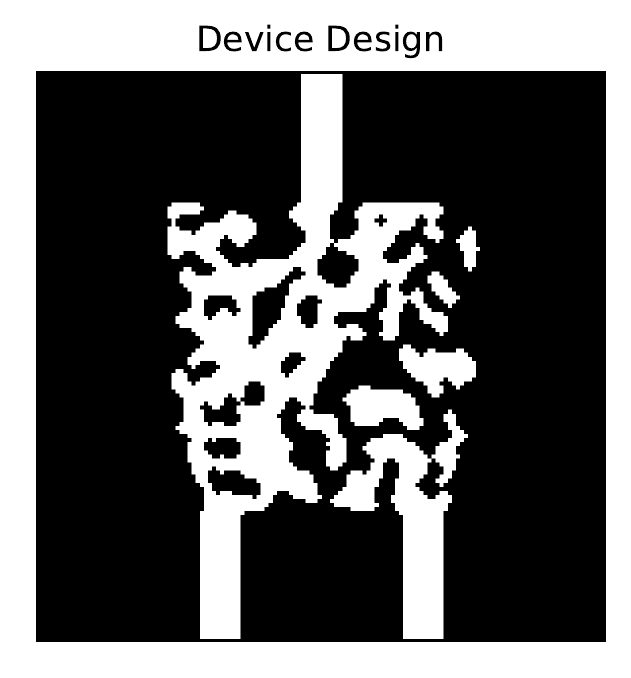}%
    \includegraphics[height=4.1cm]{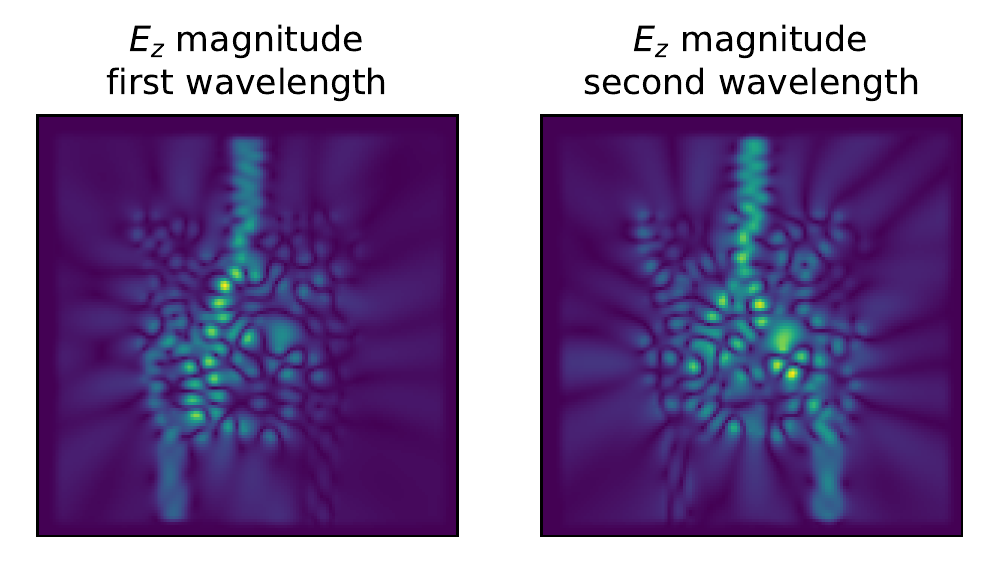}%
}
\setlength{\twosubht}{\ht\twosubbox}

\centering

\subcaptionbox{\label{fig:inv_photo_convergence}Loss Convergence}{%
\includegraphics[height=4.cm]{plots/inverse_photonics_convergence.pdf}%
}
\quad
\subcaptionbox{\label{fig:inv_photo_solution}Device}{%
\includegraphics[height=3.8cm]{plots/splitter_design.pdf}%
}
\quad
\subcaptionbox{\label{fig:inv_photo_wavelength}Wave Mangitude}{%
\includegraphics[height=3.8cm]{plots/splitter_wavelength.pdf}%
}

\caption{\small Inverse photonic design convergence example \citep{schubert_inverse_2022}. In (a), \ours{}-\optvershort{} smoothly lowers the loss while the pass-through baseline converges noisily. Also, \ours{}-\hybridvershort{} quickly fine-tunes an already high-quality solution. (b) visualizes the \ours{}-\optvershort{} solution and (c) visualizes the two wavelengths of interest which are successfully routed from the top to the bottom.}
\label{fig:inv_photo_example}
\end{figure*}

\textbf{Settings.} We compare our approaches against the \texttt{Pass-Through} method \citep{schubert_inverse_2022} on randomly generated instances of the four types of problems in \cite{schubert_inverse_2022}:
Waveguide Bend, Mode Converter, Wavelengths Division Multiplexer, and Beam Splitter. We generate 50 instances in each setting (25 training/25 test), randomly sampling the location of input and output waveguides, or ``pipes'' where we are taking in light and desire light to output. We fix the wavelengths themselves and so the problem description $\vy$ contains an image description of the problem instance, where each pixel is either ``fixed'' or ``designable''. Further generation details are in the appendix. 
We evaluated several algorithms described in the appendix, such as genetic algorithms and derivative-free optimization, which failed to find physically feasible solutions. We consider two wavelengths (1270nm/1290nm), and optimize at a resolution of 40nm, visualizing the test results in Fig. \ref{fig:inv_photo_results}.

\textbf{Results.} Fig. \ref{fig:inv_photo_results}, \ours{}-\optvershort{} consistently finds as many or more valid devices compared to the \baselinemartin{} baseline \citep{schubert_inverse_2022}. Additionally, since the on-the-fly solvers stop when they either find a valid solution, or reach a maximum of 200 steps, the runtime of \ours{}-\optvershort{} is slightly lower than
the \baselinemartin{} baseline. \ours{}-\trainvershort{} obtains similar success rates as \baselinemartin{} while taking two orders of magnitude less time as it does not require expensive impromptu optimization, making \ours{}-\trainvershort{} a promising approach for large-scale settings or when solving many slightly-varied instances. Lastly, \ours{}-\hybridvershort{} performs best in terms of solution loss, finding valid solutions more often than the other approaches. It also takes less runtime than the other on-the-fly approaches since it is able to reach valid solutions faster, although it still requires optimization on-the-fly so it takes longer than \ours{}-\trainvershort{}.
We visualize impromptu solver convergence in Fig. \ref{fig:inv_photo_convergence} where \ours{}-\optvershort{} has smoother and faster convergence than \texttt{Pass-Through}.

\subsection{Nonlinear Route Planning}

\begin{figure}[]
\centering
\scriptsize
\includegraphics[width=0.6\columnwidth]{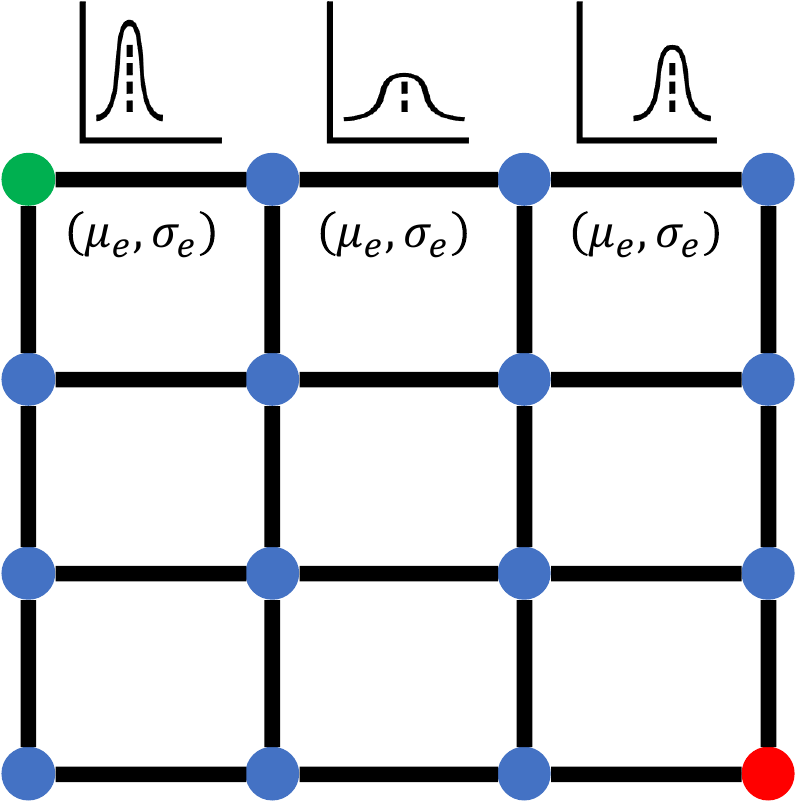}
\caption{\small Nonlinear route planning visualization. The goal is to route from the top left to bottom right corner, with the edge weights being normally distributed, maximizing the probability of arriving before a set deadline.}
\label{fig:nonlinear_shortest_path}
\end{figure}

Nonlinear route planning can arise where one wants to maximize the probability of arrival before a set time in graphs with random edges \citep{fan2005arrivingontime,nikolova2006stochastic,lim2013practical}. 
These problems occur in risk-aware settings such as emergency services operators who need to maximize the probability of arriving before a critical time, or where driver reward is determined by deadlines. 

Given a graph $G$ with edge lengths coming from a random distribution, a pair of source and destination nodes $s,t$, and a time limit $T$ that we would like to arrive before, we select a feasible $s-t$ path $P_{s,t}$ that maximizes the probability of arriving before the deadline $P[\text{length}(P_{s,t}) \leq T]$. If we assume that edge times are distributed according to a random normal distribution $t_e\sim \gN(\mu_e, \sigma_e^2)$, then we could write the objective as maximizing $f(x;y)=\Phi\left((T-\sum_{e\in P_{s,t}}\mu_e)/\sqrt{\sum_{e\in P_{s,t}}\sigma^2_e}\right)$, with $\Phi$ being the cumulative distribution function of a standard Gaussian distribution, with the feasible region $\Omega(y)$ being the set of $s-t$ paths in the graph from origin to destination. Explicitly, the problem parameters $\vy$ are the graph $G$, source and destination nodes $s,t$, time limit $T$, and the edge weight distributions specified by the edge means and variances $\mu_e, \sigma_e^2$.
We only consider the zero-shot setting without training examples since we need to solve the problem on-the-fly.
\ours{} trains surrogate edge costs $\hat{c}_e$ and finds the shortest path using Bellman-Ford \citep{bellman1958routing}, and differentiate using blackbox differentiation \citep{poganvcic2019diffbb}.

\textbf{Settings.}
We run on a 5x5 grid graph with 25 draws of edge parameters $\mu_e \sim U(0.1,1)$ and $\sigma^2_e\sim U(0.1,0.3)*(1-\mu_e)$, with $U(a,b)$ being the uniform random distribution between $a$ and $b$. We have deadline settings based on the length of the least expected time path (LET) which is simply the shortest path using $\mu_e$ as weights. We use loose, normal, and tight deadlines of 1.1 LET, 1 LET, and 0.9 LET respectively. The source and destination are oppose corners of the grid graph.

\begin{figure}[ht!]
\centering
    \includegraphics[width=\linewidth]{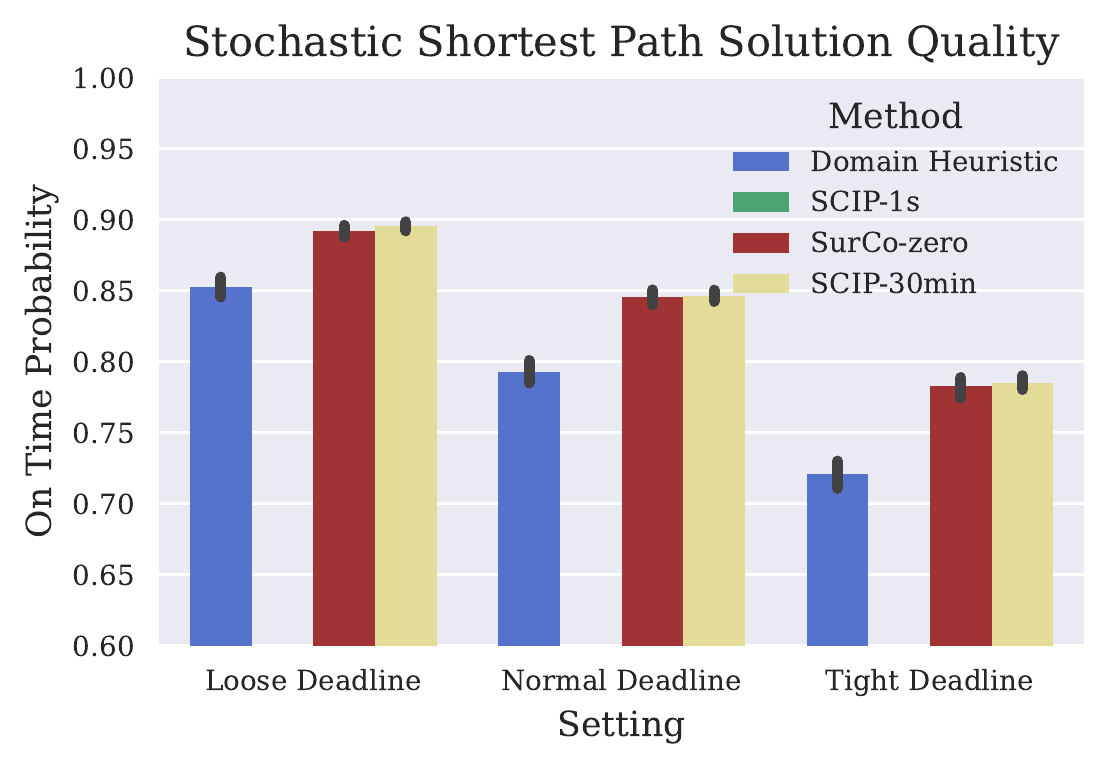}%
    \caption{\small Comparison of nonlinear route planning probability of arriving on time. We compare against a domain heuristic \citep{nikolova2006stochastic} and SCIP \citep{achterberg2009scip}. \ours{}-\optvershort{} outperforms the domain heuristic, and is similar to SCIP using less time. SCIP-1s fails to find feasible solutions.}
\label{fig:stochastic_shortest_path}
\end{figure}

\textbf{Results.} Fig. \ref{fig:stochastic_shortest_path}, we compare \ours{}-\optvershort{} against a domain-specific approach that minimizes a linear combination of mean and variance \citep{nikolova2006stochastic}, and SCIP \citep{achterberg2009scip}. In this setting, we focus on the zero-shot performance of \ours{}, comparing it against two other zero-shot approaches. Furthermore, here we are able to encode the objective analytically into SCIP whereas the objectives of the other settings do not have readily-encodeable formulations, relying on neural networks or physical simulation. Since \ours{}-\optvershort{} and the domain approach take much less than 1 second, we use SCIP-1s and find that SCIP cannot find feasible solutions at that time scale. SCIP-30min demonstrates how well a general-purpose method can do given enough time, with SCIP timing out on all instances. We also find that \ours{}-\optvershort{} is able to obtain comparable solutions to SCIP-30min. Furthermore, \ours{}-\optvershort{} consistently outperforms the domain heuristic, finding paths that reach the deadline with $4.5\%, 6.5\%, 8.5\%$ times higher success rates in loose, normal, and tight deadlines. Finally, we found only 2 instances where the domain heuristic beat \ours{}-\optvershort{}.

\section{Related Work}

\paragraph{Differentiable Optimization}
OptNet \citep{amos2017optnet} proposed implicitly differentiating through KKT conditions, a set of linear equations that determine the optimal solution. Followup work differentiated through linear programs \citep{wilder2019melding}, submodular optimization problems \citep{djolonga2017submodular, wilder2019melding}, cone programs \citep{agrawal2019cvxpylayers,agrawal2019cone}, MaxSAT \citep{wang2019satnet}, Mixed Integer Linear Programming \citep{ferber2020mipaal,mandi2020spomilp}, Integer Linear Programming \citep{mandi2020spomilp}, dynamic programming \cite{demirovic2020dynamic}, blackbox discrete linear optimizers \citep{poganvcic2019diffbb,rolinek2020rankmetric,rolinek2020graphmatching}, maximum likelihood estimation \citep{niepert2021imle}, kmeans clustering \citep{wilder2019graphcombopt}, knapsack \citep{guler2022divide,demirovic2019exhaustive}, the cross-entropy method \citep{amos2020cem}, Nonlinear Least Squares \citep{pineda2022theseus}, SVM training \citep{lee2019metasvm}, and combining LP variables \cite{wang2020surrogates}. \ours{} can leverage these differentiable surrogates for different problem domains. 

\paragraph{Task Based Learning}
Task-based learning solves distributions of linear or quadratic optimization problems with the true objective hidden at test time but available for training \citep{elmachtoub2022spo, donti2017task,el2019generalizationspo,liu2021riskspo,hu2022fastconvergencespo}.
\citep{donti2021dc3} predicts and corrects solutions for continuous nonlinear optimization.
{Bayesian optimization (BO) \citep{shahriari2016bayesopt}, optimizes blackbox functions by approximating the objective with a learned model that can be optimized over. Recent work optimizes individual instances over discrete spaces like hypercubes \citep{baptista2018bayesianco}, graphs \citep{deshwal2021mercer}, and MILP \citep{papalexopoulos2022bomilpnn}. Data reuse from previous runs is proposed to optimize multiple correlated instances \citep{swersky2013multitask, feurer2018scalablemetalearning}. However, the surrogate Gaussian Process (GP) models are memory and time intensive in high-dimensional settings. Recent work has addressed GP scalability via gradient updates \citep{ament2022scalablebo}; however, it is unclear whether GP can scale in conjunction with combinatorial solvers.}
%
Machine learning is also used to guide combinatorial algorithms. Several approaches produce combinatorial solutions \citep{zhang1995rljobshop,khalil2017graphco,kool2018attention,nazari2018rlvrp,zha2022autoshard,zha2022dreamshard}. Here, approaches are limited to simple feasible regions by iteratively building solutions for problems like routing, assignment, or covering. However, these approaches fail to handle more complex constraints. Other approaches set parameters that improve solver runtime \citep{khalil2016learningtobranch,bengio2021ml4co}. Similarly, a neural diving approach has been proposed for finding fast MILP solutions \cite{nair2020neuraldiving}. This approach requires iteratively solving a subproblem which in the nonlinear setting may still be hard to solve or even encode.

\paragraph{Learning Latent Space for Optimization}
We learn latent linear objectives to optimize nonlinear functions while other approaches learn latent embeddings for faster solving. FastMap \citep{faloutsos1995fastmap} learns latent object embeddings for efficient search, and variants of FastMap are used in graph optimization and shortest path \citep{cohen2018fastmapsp,hu2022fastconvergencespo,li2019fastmapgraphopt}. \cite{wang2020lamcts,wang2021sample,yang2021learning,zhao2022multiobjective} use Monte Carlo Tree Search to perform single and multi-objective optimization by learning to split the search space.

\paragraph{Mixed Integer Nonlinear Programming (MINLP)}
\ours{}-\optvershort{} operates as a MINLP solver, optimizing nonlinear and nonconvex objectives over discrete linear feasible regions. Specialized solvers handle some MINLP variants \citep{burer2012minlpnonconv, belotti2013minlp}; however, scalability in nonconvex settings usually requires problem-specific techniques like piecewise linear approximation, objective convexification, or exploiting special structure. 

\section{Conclusion}

We introduced \ours{}, a method for learning linear surrogates for combinatorial nonlinear optimization problems. \ours{} learns linear objective coefficients for a surrogate solver which results in solutions that minimize the nonlinear loss via gradient descent. At its core, \ours{} differentiates through the surrogate solver which maps the predicted coefficients to a combinatorially feasible solution, combining the flexibility of gradient-based optimization with the structure of combinatorial solvers. Our theoretical intuition for \ours{} poses promising directions for future work in proving convergence guarantees or generalization bounds. Additionally,  improvements of \ours{} may enable scalable solving for settings in stochastic optimization, game theory, combinatorial reinforcement learning, and more. We presented three variants of \ours{}, \ours{}-\optver{} which optimizes individual instances, \ours{}-\trainver{} which trains a coefficient prediction model offline, and \ours{}-\hybridver{} which fine-tunes the coefficients predicted by \ours{}-\trainver{} on individual test instances. {While \ours{}'s performance is somewhat limited to binary problems due to the lack of interior integer points, we find that many real-world domains operate on binary decision variables. We evaluated variants of \ours{}} against the state-of-the-art approaches on three domains, with two used in industry, obtaining better solution quality for similar or better runtime in the embedding table sharding domain, quickly identifying viable photonic devices, and finding successful routes in stochastic path planning. Overall, \ours{} trains linear surrogate coefficients to point the solver towards high-quality solutions, becoming a general-purpose method that aims to tackle a broad class of combinatorial problems with nonlinear objectives when off-the-shelf solvers fail.
\section*{Acknowledgements.} 
This paper reports on research done while Aaron Ferber, Taoan Huang, and Daochen Zha were interns at Meta AI (FAIR). The research at the University of Southern California was supported by the National Science Foundation (NSF) under grant number 2112533. We also thank the anonymous reviewers for helpful feedback.

\newpage
\bibliography{09_bibliography}

\begin{thebibliography}{98}
\providecommand{\natexlab}[1]{#1}
\providecommand{\url}[1]{\texttt{#1}}
\expandafter\ifx\csname urlstyle\endcsname\relax
  \providecommand{\doi}[1]{doi: #1}\else
  \providecommand{\doi}{doi: \begingroup \urlstyle{rm}\Url}\fi

\bibitem[Achterberg(2009)]{achterberg2009scip}
Achterberg, T.
\newblock Scip: solving constraint integer programs.
\newblock \emph{Mathematical Programming Computation}, 1\penalty0 (1):\penalty0
  1--41, 2009.

\bibitem[Agrawal et~al.(2019{\natexlab{a}})Agrawal, Amos, Barratt, Boyd,
  Diamond, and Kolter]{agrawal2019cvxpylayers}
Agrawal, A., Amos, B., Barratt, S., Boyd, S., Diamond, S., and Kolter, J.~Z.
\newblock Differentiable convex optimization layers.
\newblock \emph{Advances in neural information processing systems}, 32,
  2019{\natexlab{a}}.

\bibitem[Agrawal et~al.(2019{\natexlab{b}})Agrawal, Barratt, Boyd, Busseti, and
  Moursi]{agrawal2019cone}
Agrawal, A., Barratt, S., Boyd, S., Busseti, E., and Moursi, W.~M.
\newblock Differentiating through a cone program.
\newblock \emph{J. Appl. Numer. Optim}, 1\penalty0 (2):\penalty0 107--115,
  2019{\natexlab{b}}.

\bibitem[Ament \& Gomes(2022)Ament and Gomes]{ament2022scalablebo}
Ament, S.~E. and Gomes, C.~P.
\newblock Scalable first-order bayesian optimization via structured automatic
  differentiation.
\newblock In \emph{International Conference on Machine Learning}, pp.\
  500--516. PMLR, 2022.

\bibitem[Amos \& Kolter(2017)Amos and Kolter]{amos2017optnet}
Amos, B. and Kolter, J.~Z.
\newblock Optnet: Differentiable optimization as a layer in neural networks.
\newblock In \emph{International Conference on Machine Learning}, pp.\
  136--145. PMLR, 2017.

\bibitem[Amos \& Yarats(2020)Amos and Yarats]{amos2020cem}
Amos, B. and Yarats, D.
\newblock The differentiable cross-entropy method.
\newblock In \emph{International Conference on Machine Learning}, pp.\
  291--302. PMLR, 2020.

\bibitem[Arrazola et~al.(2021)Arrazola, Bergholm, Br{\'a}dler, Bromley,
  Collins, Dhand, Fumagalli, Gerrits, Goussev, Helt,
  et~al.]{arrazola2021quantum}
Arrazola, J.~M., Bergholm, V., Br{\'a}dler, K., Bromley, T.~R., Collins, M.~J.,
  Dhand, I., Fumagalli, A., Gerrits, T., Goussev, A., Helt, L.~G., et~al.
\newblock Quantum circuits with many photons on a programmable nanophotonic
  chip.
\newblock \emph{Nature}, 591\penalty0 (7848):\penalty0 54--60, 2021.

\bibitem[Ban \& Rudin(2019)Ban and Rudin]{ban2019big}
Ban, G.-Y. and Rudin, C.
\newblock The big data newsvendor: Practical insights from machine learning.
\newblock \emph{Operations Research}, 67\penalty0 (1):\penalty0 90--108, 2019.

\bibitem[Baptista \& Poloczek(2018)Baptista and
  Poloczek]{baptista2018bayesianco}
Baptista, R. and Poloczek, M.
\newblock Bayesian optimization of combinatorial structures.
\newblock In \emph{International Conference on Machine Learning}, pp.\
  462--471. PMLR, 2018.

\bibitem[Bellman(1958)]{bellman1958routing}
Bellman, R.
\newblock On a routing problem.
\newblock \emph{Quarterly of applied mathematics}, 16\penalty0 (1):\penalty0
  87--90, 1958.

\bibitem[Belotti et~al.(2013)Belotti, Kirches, Leyffer, Linderoth, Luedtke, and
  Mahajan]{belotti2013minlp}
Belotti, P., Kirches, C., Leyffer, S., Linderoth, J., Luedtke, J., and Mahajan,
  A.
\newblock Mixed-integer nonlinear optimization.
\newblock \emph{Acta Numerica}, 22:\penalty0 1--131, 2013.

\bibitem[Bengio et~al.(2021)Bengio, Lodi, and Prouvost]{bengio2021ml4co}
Bengio, Y., Lodi, A., and Prouvost, A.
\newblock Machine learning for combinatorial optimization: a methodological
  tour d’horizon.
\newblock \emph{European Journal of Operational Research}, 290\penalty0
  (2):\penalty0 405--421, 2021.

\bibitem[Berthet et~al.(2020)Berthet, Blondel, Teboul, Cuturi, Vert, and
  Bach]{berthet2020diffperturb}
Berthet, Q., Blondel, M., Teboul, O., Cuturi, M., Vert, J.-P., and Bach, F.
\newblock Learning with differentiable pertubed optimizers.
\newblock \emph{Advances in neural information processing systems},
  33:\penalty0 9508--9519, 2020.

\bibitem[Bradbury et~al.(2018)Bradbury, Frostig, Hawkins, Johnson, Leary,
  Maclaurin, Necula, Paszke, Vander{P}las, Wanderman-{M}ilne, and
  Zhang]{jax2018github}
Bradbury, J., Frostig, R., Hawkins, P., Johnson, M.~J., Leary, C., Maclaurin,
  D., Necula, G., Paszke, A., Vander{P}las, J., Wanderman-{M}ilne, S., and
  Zhang, Q.
\newblock {JAX}: composable transformations of {P}ython+{N}um{P}y programs,
  2018.
\newblock URL \url{http://github.com/google/jax}.

\bibitem[Burer \& Letchford(2012)Burer and Letchford]{burer2012minlpnonconv}
Burer, S. and Letchford, A.~N.
\newblock Non-convex mixed-integer nonlinear programming: A survey.
\newblock \emph{Surveys in Operations Research and Management Science},
  17\penalty0 (2):\penalty0 97--106, 2012.

\bibitem[Chvatal et~al.(1983)Chvatal, Chvatal, et~al.]{chvatal1983linear}
Chvatal, V., Chvatal, V., et~al.
\newblock \emph{Linear programming}.
\newblock Macmillan, 1983.

\bibitem[Cohen et~al.(2018)Cohen, Uras, Jahangiri, Arunasalam, Koenig, and
  Kumar]{cohen2018fastmapsp}
Cohen, L., Uras, T., Jahangiri, S., Arunasalam, A., Koenig, S., and Kumar,
  T.~S.
\newblock The fastmap algorithm for shortest path computations.
\newblock In \emph{IJCAI}, 2018.

\bibitem[Cozad et~al.(2014)Cozad, Sahinidis, and Miller]{cozad2014learning}
Cozad, A., Sahinidis, N.~V., and Miller, D.~C.
\newblock Learning surrogate models for simulation-based optimization.
\newblock \emph{AIChE Journal}, 60\penalty0 (6):\penalty0 2211--2227, 2014.

\bibitem[Demirovi{\'c} et~al.(2019)Demirovi{\'c}, J~Stuckey, Bailey, Chan,
  Leckie, Ramamohanarao, and Guns]{demirovic2019exhaustive}
Demirovi{\'c}, E., J~Stuckey, P., Bailey, J., Chan, J., Leckie, C.,
  Ramamohanarao, K., and Guns, T.
\newblock Predict+ optimise with ranking objectives: Exhaustively learning
  linear functions.
\newblock In \emph{Proceedings of the Twenty-Eighth International Joint
  Conference on Artificial Intelligence, IJCAI 2019, Macao, China, August
  10-16, 2019}, pp.\  1078--1085. International Joint Conferences on Artificial
  Intelligence, 2019.

\bibitem[Demirovic et~al.(2020)Demirovic, J~Stuckey, Guns, Bailey, Leckie,
  Ramamohanarao, and Chan]{demirovic2020dynamic}
Demirovic, E., J~Stuckey, P., Guns, T., Bailey, J., Leckie, C., Ramamohanarao,
  K., and Chan, J.
\newblock Dynamic programming for predict+ optimise.
\newblock In \emph{The Thirty-Fourth AAAI Conference on Artificial
  Intelligence, AAAI 2020, The Thirty-Second Innovative Applications of
  Artificial Intelligence Conference, IAAI 2020, The Tenth AAAI Symposium on
  Educational Advances in Artificial Intelligence, EAAI 2020, New York, NY,
  USA, February 7-12, 2020}, pp.\  1444--1451. AAAI Press, 2020.

\bibitem[Deshwal et~al.(2021)Deshwal, Belakaria, and Doppa]{deshwal2021mercer}
Deshwal, A., Belakaria, S., and Doppa, J.~R.
\newblock Mercer features for efficient combinatorial bayesian optimization.
\newblock \emph{Proceedings of the AAAI Conference on Artificial Intelligence},
  35\penalty0 (8):\penalty0 7210--7218, May 2021.
\newblock \doi{10.1609/aaai.v35i8.16886}.
\newblock URL \url{https://ojs.aaai.org/index.php/AAAI/article/view/16886}.

\bibitem[Djolonga \& Krause(2017)Djolonga and Krause]{djolonga2017submodular}
Djolonga, J. and Krause, A.
\newblock Differentiable learning of submodular models.
\newblock \emph{Advances in Neural Information Processing Systems}, 30, 2017.

\bibitem[Donti et~al.(2017)Donti, Amos, and Kolter]{donti2017task}
Donti, P., Amos, B., and Kolter, J.~Z.
\newblock Task-based end-to-end model learning in stochastic optimization.
\newblock \emph{Advances in neural information processing systems}, 30, 2017.

\bibitem[Donti et~al.(2021)Donti, Rolnick, and Kolter]{donti2021dc3}
Donti, P.~L., Rolnick, D., and Kolter, J.~Z.
\newblock {DC}3: A learning method for optimization with hard constraints.
\newblock In \emph{International Conference on Learning Representations}, 2021.
\newblock URL \url{https://openreview.net/forum?id=V1ZHVxJ6dSS}.

\bibitem[El~Balghiti et~al.(2019)El~Balghiti, Elmachtoub, Grigas, and
  Tewari]{el2019generalizationspo}
El~Balghiti, O., Elmachtoub, A.~N., Grigas, P., and Tewari, A.
\newblock Generalization bounds in the predict-then-optimize framework.
\newblock \emph{Advances in neural information processing systems}, 32, 2019.

\bibitem[Elmachtoub \& Grigas(2022{\natexlab{a}})Elmachtoub and
  Grigas]{elmachtoub2022smart}
Elmachtoub, A.~N. and Grigas, P.
\newblock Smart “predict, then optimize”.
\newblock \emph{Management Science}, 68\penalty0 (1):\penalty0 9--26,
  2022{\natexlab{a}}.

\bibitem[Elmachtoub \& Grigas(2022{\natexlab{b}})Elmachtoub and
  Grigas]{elmachtoub2022spo}
Elmachtoub, A.~N. and Grigas, P.
\newblock Smart “predict, then optimize”.
\newblock \emph{Management Science}, 68\penalty0 (1):\penalty0 9--26,
  2022{\natexlab{b}}.

\bibitem[Faloutsos \& Lin(1995)Faloutsos and Lin]{faloutsos1995fastmap}
Faloutsos, C. and Lin, K.-I.
\newblock Fastmap: A fast algorithm for indexing, data-mining and visualization
  of traditional and multimedia datasets.
\newblock In \emph{Proceedings of the 1995 ACM SIGMOD International Conference
  on Management of Data}, SIGMOD '95, pp.\  163–174, New York, NY, USA, 1995.
  Association for Computing Machinery.
\newblock ISBN 0897917316.
\newblock \doi{10.1145/223784.223812}.
\newblock URL \url{https://doi.org/10.1145/223784.223812}.

\bibitem[Fan et~al.(2005)Fan, Kalaba, and Moore]{fan2005arrivingontime}
Fan, Y., Kalaba, R.~E., and Moore, J.~E.
\newblock Arriving on time.
\newblock \emph{Journal of Optimization Theory and Applications}, 127:\penalty0
  497--513, 2005.

\bibitem[Ferber et~al.(2020)Ferber, Wilder, Dilkina, and
  Tambe]{ferber2020mipaal}
Ferber, A., Wilder, B., Dilkina, B., and Tambe, M.
\newblock Mipaal: Mixed integer program as a layer.
\newblock In \emph{Proceedings of the AAAI Conference on Artificial
  Intelligence}, volume~34, pp.\  1504--1511, 2020.

\bibitem[Feurer et~al.(2018)Feurer, Letham, and
  Bakshy]{feurer2018scalablemetalearning}
Feurer, M., Letham, B., and Bakshy, E.
\newblock Scalable meta-learning for bayesian optimization.
\newblock \emph{stat}, 1050\penalty0 (6), 2018.

\bibitem[Fix(1985)]{fix19851nn}
Fix, E.
\newblock \emph{Discriminatory analysis: nonparametric discrimination,
  consistency properties}, volume~1.
\newblock USAF school of Aviation Medicine, 1985.

\bibitem[Gad(2021)]{gad2021pygad}
Gad, A.~F.
\newblock Pygad: An intuitive genetic algorithm python library, 2021.

\bibitem[Gosavi et~al.(2015)]{gosavi2015simulation}
Gosavi, A. et~al.
\newblock \emph{Simulation-based optimization}.
\newblock Springer, 2015.

\bibitem[Guler et~al.(2022)Guler, Demirovi{\'c}, Chan, Bailey, Leckie, and
  Stuckey]{guler2022divide}
Guler, A.~U., Demirovi{\'c}, E., Chan, J., Bailey, J., Leckie, C., and Stuckey,
  P.~J.
\newblock A divide and conquer algorithm for predict+ optimize with non-convex
  problems.
\newblock In \emph{Proceedings of the AAAI Conference on Artificial
  Intelligence}, volume~36, pp.\  3749--3757, 2022.

\bibitem[{Gurobi Optimization, LLC}(2022)]{gurobi}
{Gurobi Optimization, LLC}.
\newblock {Gurobi Optimizer Reference Manual}, 2022.
\newblock URL \url{https://www.gurobi.com}.

\bibitem[Hu et~al.(2022)Hu, Kallus, and Mao]{hu2022fastconvergencespo}
Hu, Y., Kallus, N., and Mao, X.
\newblock Fast rates for contextual linear optimization.
\newblock \emph{Management Science}, 2022.

\bibitem[Hughes et~al.(2019)Hughes, Williamson, Minkov, and
  Fan]{hughes2019ceviche}
Hughes, T.~W., Williamson, I.~A., Minkov, M., and Fan, S.
\newblock Forward-mode differentiation of maxwell’s equations.
\newblock \emph{ACS Photonics}, 6\penalty0 (11):\penalty0 3010--3016, 2019.

\bibitem[Jumper et~al.(2021)Jumper, Evans, Pritzel, Green, Figurnov,
  Ronneberger, Tunyasuvunakool, Bates, {\v{Z}}{\'\i}dek, Potapenko,
  et~al.]{jumper2021highly}
Jumper, J., Evans, R., Pritzel, A., Green, T., Figurnov, M., Ronneberger, O.,
  Tunyasuvunakool, K., Bates, R., {\v{Z}}{\'\i}dek, A., Potapenko, A., et~al.
\newblock Highly accurate protein structure prediction with alphafold.
\newblock \emph{Nature}, 596\penalty0 (7873):\penalty0 583--589, 2021.

\bibitem[Khalil et~al.(2016)Khalil, Le~Bodic, Song, Nemhauser, and
  Dilkina]{khalil2016learningtobranch}
Khalil, E., Le~Bodic, P., Song, L., Nemhauser, G., and Dilkina, B.
\newblock Learning to branch in mixed integer programming.
\newblock In \emph{Proceedings of the AAAI Conference on Artificial
  Intelligence}, volume~30, 2016.

\bibitem[Khalil et~al.(2017)Khalil, Dai, Zhang, Dilkina, and
  Song]{khalil2017graphco}
Khalil, E., Dai, H., Zhang, Y., Dilkina, B., and Song, L.
\newblock Learning combinatorial optimization algorithms over graphs.
\newblock \emph{Advances in neural information processing systems}, 30, 2017.

\bibitem[Kool et~al.(2018)Kool, van Hoof, and Welling]{kool2018attention}
Kool, W., van Hoof, H., and Welling, M.
\newblock Attention, learn to solve routing problems!
\newblock In \emph{International Conference on Learning Representations}, 2018.

\bibitem[Korte \& Hausmann(1978)Korte and Hausmann]{korte1978analysis}
Korte, B. and Hausmann, D.
\newblock An analysis of the greedy heuristic for independence systems.
\newblock In \emph{Annals of Discrete Mathematics}, volume~2, pp.\  65--74.
  Elsevier, 1978.

\bibitem[Koziel et~al.(2021)Koziel, {\c{C}}al{\i}k, Mahouti, and
  Belen]{koziel2021accurate}
Koziel, S., {\c{C}}al{\i}k, N., Mahouti, P., and Belen, M.~A.
\newblock Accurate modeling of antenna structures by means of domain
  confinement and pyramidal deep neural networks.
\newblock \emph{IEEE Transactions on Antennas and Propagation}, 70\penalty0
  (3):\penalty0 2174--2188, 2021.

\bibitem[Land \& Doig(2010)Land and Doig]{land2010automatic}
Land, A.~H. and Doig, A.~G.
\newblock An automatic method for solving discrete programming problems.
\newblock In \emph{50 Years of Integer Programming 1958-2008}, pp.\  105--132.
  Springer, 2010.

\bibitem[Lee et~al.(2019)Lee, Maji, Ravichandran, and Soatto]{lee2019metasvm}
Lee, K., Maji, S., Ravichandran, A., and Soatto, S.
\newblock Meta-learning with differentiable convex optimization.
\newblock In \emph{Proceedings of the IEEE/CVF conference on computer vision
  and pattern recognition}, pp.\  10657--10665, 2019.

\bibitem[Li et~al.(2019)Li, Felner, Koenig, and Kumar]{li2019fastmapgraphopt}
Li, J., Felner, A., Koenig, S., and Kumar, T.~S.
\newblock Using fastmap to solve graph problems in a euclidean space.
\newblock In \emph{Proceedings of the international conference on automated
  planning and scheduling}, volume~29, pp.\  273--278, 2019.

\bibitem[Li et~al.(2021)Li, Yan, and Wu]{li2021learning}
Li, S., Yan, Z., and Wu, C.
\newblock Learning to delegate for large-scale vehicle routing.
\newblock \emph{Advances in Neural Information Processing Systems},
  34:\penalty0 26198--26211, 2021.

\bibitem[Li et~al.(2018)Li, Chen, and Koltun]{li2018combinatorial}
Li, Z., Chen, Q., and Koltun, V.
\newblock Combinatorial optimization with graph convolutional networks and
  guided tree search.
\newblock \emph{Advances in neural information processing systems}, 31, 2018.

\bibitem[Lim et~al.(2013)Lim, Sommer, Nikolova, and Rus]{lim2013practical}
Lim, S., Sommer, C., Nikolova, E., and Rus, D.
\newblock Practical route planning under delay uncertainty: Stochastic shortest
  path queries.
\newblock In \emph{Robotics: Science and Systems}, volume~8, pp.\  249--256.
  United States, 2013.

\bibitem[Liu \& Grigas(2021)Liu and Grigas]{liu2021riskspo}
Liu, H. and Grigas, P.
\newblock Risk bounds and calibration for a smart predict-then-optimize method.
\newblock \emph{Advances in Neural Information Processing Systems},
  34:\penalty0 22083--22094, 2021.

\bibitem[Liuzzi et~al.(2015)Liuzzi, Lucidi, and Rinaldi]{liuzzi2015dfl}
Liuzzi, G., Lucidi, S., and Rinaldi, F.
\newblock Derivative-free methods for mixed-integer constrained optimization
  problems.
\newblock \emph{Journal of Optimization Theory and Applications}, 164\penalty0
  (3):\penalty0 933--965, 2015.

\bibitem[Mandi et~al.(2020)Mandi, Stuckey, Guns, et~al.]{mandi2020spomilp}
Mandi, J., Stuckey, P.~J., Guns, T., et~al.
\newblock Smart predict-and-optimize for hard combinatorial optimization
  problems.
\newblock In \emph{Proceedings of the AAAI Conference on Artificial
  Intelligence}, volume~34, pp.\  1603--1610, 2020.

\bibitem[Marpaung et~al.(2019)Marpaung, Yao, and
  Capmany]{marpaung2019microwave}
Marpaung, D., Yao, J., and Capmany, J.
\newblock Integrated microwave photonics.
\newblock \emph{Nature photonics}, 13\penalty0 (2):\penalty0 80--90, 2019.

\bibitem[Mazyavkina et~al.(2021)Mazyavkina, Sviridov, Ivanov, and
  Burnaev]{mazyavkina2021reinforcement}
Mazyavkina, N., Sviridov, S., Ivanov, S., and Burnaev, E.
\newblock Reinforcement learning for combinatorial optimization: A survey.
\newblock \emph{Computers \& Operations Research}, 134:\penalty0 105400, 2021.

\bibitem[Mirhoseini et~al.(2021)Mirhoseini, Goldie, Yazgan, Jiang, Songhori,
  Wang, Lee, Johnson, Pathak, Nazi, et~al.]{mirhoseini2021chipdesign}
Mirhoseini, A., Goldie, A., Yazgan, M., Jiang, J.~W., Songhori, E., Wang, S.,
  Lee, Y.-J., Johnson, E., Pathak, O., Nazi, A., et~al.
\newblock A graph placement methodology for fast chip design.
\newblock \emph{Nature}, 594\penalty0 (7862):\penalty0 207--212, 2021.

\bibitem[Nagai et~al.(2020)Nagai, Akashi, and Sugino]{nagai2020completing}
Nagai, R., Akashi, R., and Sugino, O.
\newblock Completing density functional theory by machine learning hidden
  messages from molecules.
\newblock \emph{npj Computational Materials}, 6\penalty0 (1):\penalty0 1--8,
  2020.

\bibitem[Nair et~al.(2020)Nair, Bartunov, Gimeno, Von~Glehn, Lichocki, Lobov,
  O'Donoghue, Sonnerat, Tjandraatmadja, Wang, et~al.]{nair2020neuraldiving}
Nair, V., Bartunov, S., Gimeno, F., Von~Glehn, I., Lichocki, P., Lobov, I.,
  O'Donoghue, B., Sonnerat, N., Tjandraatmadja, C., Wang, P., et~al.
\newblock Solving mixed integer programs using neural networks.
\newblock \emph{arXiv preprint arXiv:2012.13349}, 2020.

\bibitem[Naumov et~al.(2019)Naumov, Mudigere, Shi, Huang, Sundaraman, Park,
  Wang, Gupta, Wu, Azzolini, Dzhulgakov, Mallevich, Cherniavskii, Lu,
  Krishnamoorthi, Yu, Kondratenko, Pereira, Chen, Chen, Rao, Jia, Xiong, and
  Smelyanskiy]{DLRM19}
Naumov, M., Mudigere, D., Shi, H.~M., Huang, J., Sundaraman, N., Park, J.,
  Wang, X., Gupta, U., Wu, C., Azzolini, A.~G., Dzhulgakov, D., Mallevich, A.,
  Cherniavskii, I., Lu, Y., Krishnamoorthi, R., Yu, A., Kondratenko, V.,
  Pereira, S., Chen, X., Chen, W., Rao, V., Jia, B., Xiong, L., and
  Smelyanskiy, M.
\newblock Deep learning recommendation model for personalization and
  recommendation systems.
\newblock \emph{CoRR}, abs/1906.00091, 2019.
\newblock URL \url{https://arxiv.org/abs/1906.00091}.

\bibitem[Nazari et~al.(2018)Nazari, Oroojlooy, Snyder, and
  Tak{\'a}c]{nazari2018rlvrp}
Nazari, M., Oroojlooy, A., Snyder, L., and Tak{\'a}c, M.
\newblock Reinforcement learning for solving the vehicle routing problem.
\newblock \emph{Advances in neural information processing systems}, 31, 2018.

\bibitem[Niepert et~al.(2021)Niepert, Minervini, and
  Franceschi]{niepert2021imle}
Niepert, M., Minervini, P., and Franceschi, L.
\newblock Implicit mle: backpropagating through discrete exponential family
  distributions.
\newblock \emph{Advances in Neural Information Processing Systems},
  34:\penalty0 14567--14579, 2021.

\bibitem[Nikolova et~al.(2006)Nikolova, Kelner, Brand, and
  Mitzenmacher]{nikolova2006stochastic}
Nikolova, E., Kelner, J.~A., Brand, M., and Mitzenmacher, M.
\newblock Stochastic shortest paths via quasi-convex maximization.
\newblock In \emph{European Symposium on Algorithms}, pp.\  552--563. Springer,
  2006.

\bibitem[Papalexopoulos et~al.(2022)Papalexopoulos, Tjandraatmadja, Anderson,
  Vielma, and Belanger]{papalexopoulos2022bomilpnn}
Papalexopoulos, T.~P., Tjandraatmadja, C., Anderson, R., Vielma, J.~P., and
  Belanger, D.
\newblock Constrained discrete black-box optimization using mixed-integer
  programming.
\newblock In Chaudhuri, K., Jegelka, S., Song, L., Szepesvari, C., Niu, G., and
  Sabato, S. (eds.), \emph{Proceedings of the 39th International Conference on
  Machine Learning}, volume 162 of \emph{Proceedings of Machine Learning
  Research}, pp.\  17295--17322. PMLR, 17--23 Jul 2022.
\newblock URL \url{https://proceedings.mlr.press/v162/papalexopoulos22a.html}.

\bibitem[Paszke et~al.(2019)Paszke, Gross, Massa, Lerer, Bradbury, Chanan,
  Killeen, Lin, Gimelshein, Antiga, Desmaison, Kopf, Yang, DeVito, Raison,
  Tejani, Chilamkurthy, Steiner, Fang, Bai, and Chintala]{pytorch}
Paszke, A., Gross, S., Massa, F., Lerer, A., Bradbury, J., Chanan, G., Killeen,
  T., Lin, Z., Gimelshein, N., Antiga, L., Desmaison, A., Kopf, A., Yang, E.,
  DeVito, Z., Raison, M., Tejani, A., Chilamkurthy, S., Steiner, B., Fang, L.,
  Bai, J., and Chintala, S.
\newblock Pytorch: An imperative style, high-performance deep learning library.
\newblock In Wallach, H., Larochelle, H., Beygelzimer, A., d\textquotesingle
  Alch\'{e}-Buc, F., Fox, E., and Garnett, R. (eds.), \emph{Advances in Neural
  Information Processing Systems 32}, pp.\  8024--8035. Curran Associates,
  Inc., 2019.

\bibitem[Pineda et~al.(2022)Pineda, Fan, Monge, Venkataraman, Sodhi, Chen,
  Ortiz, DeTone, Wang, Anderson, et~al.]{pineda2022theseus}
Pineda, L., Fan, T., Monge, M., Venkataraman, S., Sodhi, P., Chen, R.~T.,
  Ortiz, J., DeTone, D., Wang, A., Anderson, S., et~al.
\newblock Theseus: A library for differentiable nonlinear optimization.
\newblock \emph{Advances in Neural Information Processing Systems},
  35:\penalty0 3801--3818, 2022.

\bibitem[Pogan{\v{c}}i{\'c} et~al.(2019)Pogan{\v{c}}i{\'c}, Paulus, Musil,
  Martius, and Rolinek]{poganvcic2019diffbb}
Pogan{\v{c}}i{\'c}, M.~V., Paulus, A., Musil, V., Martius, G., and Rolinek, M.
\newblock Differentiation of blackbox combinatorial solvers.
\newblock In \emph{International Conference on Learning Representations}, 2019.

\bibitem[Rapin \& Teytaud(2018)Rapin and Teytaud]{nevergrad}
Rapin, J. and Teytaud, O.
\newblock {Nevergrad - A gradient-free optimization platform}.
\newblock \url{https://GitHub.com/FacebookResearch/Nevergrad}, 2018.

\bibitem[Reingold \& Tarjan(1981)Reingold and Tarjan]{reingold1981greedy}
Reingold, E.~M. and Tarjan, R.~E.
\newblock On a greedy heuristic for complete matching.
\newblock \emph{SIAM Journal on Computing}, 10\penalty0 (4):\penalty0 676--681,
  1981.

\bibitem[Rol{\'\i}nek et~al.(2020{\natexlab{a}})Rol{\'\i}nek, Musil, Paulus,
  Vlastelica, Michaelis, and Martius]{rolinek2020rankmetric}
Rol{\'\i}nek, M., Musil, V., Paulus, A., Vlastelica, M., Michaelis, C., and
  Martius, G.
\newblock Optimizing rank-based metrics with blackbox differentiation.
\newblock In \emph{Proceedings of the IEEE/CVF Conference on Computer Vision
  and Pattern Recognition}, pp.\  7620--7630, 2020{\natexlab{a}}.

\bibitem[Rol{\'\i}nek et~al.(2020{\natexlab{b}})Rol{\'\i}nek, Swoboda, Zietlow,
  Paulus, Musil, and Martius]{rolinek2020graphmatching}
Rol{\'\i}nek, M., Swoboda, P., Zietlow, D., Paulus, A., Musil, V., and Martius,
  G.
\newblock Deep graph matching via blackbox differentiation of combinatorial
  solvers.
\newblock In \emph{European Conference on Computer Vision}, pp.\  407--424.
  Springer, 2020{\natexlab{b}}.

\bibitem[Ruder(2016)]{ruder2016overview}
Ruder, S.
\newblock An overview of gradient descent optimization algorithms.
\newblock \emph{arXiv preprint arXiv:1609.04747}, 2016.

\bibitem[Schubert et~al.(2022)Schubert, Cheung, Williamson, Spyra, and
  Alexander]{schubert_inverse_2022}
Schubert, M.~F., Cheung, A. K.~C., Williamson, I. A.~D., Spyra, A., and
  Alexander, D.~H.
\newblock Inverse design of photonic devices with strict foundry fabrication
  constraints.
\newblock \emph{ACS Photonics}, 9\penalty0 (7):\penalty0 2327--2336, 2022.
\newblock \doi{10.1021/acsphotonics.2c00313}.

\bibitem[Sethi et~al.(2022)Sethi, Acun, Agarwal, Kozyrakis, Trippel, and
  Wu]{sethi2022recshard}
Sethi, G., Acun, B., Agarwal, N., Kozyrakis, C., Trippel, C., and Wu, C.-J.
\newblock Recshard: statistical feature-based memory optimization for
  industry-scale neural recommendation.
\newblock In \emph{Proceedings of the 27th ACM International Conference on
  Architectural Support for Programming Languages and Operating Systems}, pp.\
  344--358, 2022.

\bibitem[Shahriari et~al.(2016)Shahriari, Swersky, Wang, Adams, and
  de~Freitas]{shahriari2016bayesopt}
Shahriari, B., Swersky, K., Wang, Z., Adams, R.~P., and de~Freitas, N.
\newblock Taking the human out of the loop: A review of bayesian optimization.
\newblock \emph{Proceedings of the IEEE}, 104\penalty0 (1):\penalty0 148--175,
  2016.
\newblock \doi{10.1109/JPROC.2015.2494218}.

\bibitem[Simon(2013)]{simon2013evolutionary}
Simon, D.
\newblock \emph{Evolutionary optimization algorithms}.
\newblock John Wiley \& Sons, 2013.

\bibitem[Steiner et~al.(2021)Steiner, Cummins, He, and
  Leather]{steiner2021value}
Steiner, B., Cummins, C., He, H., and Leather, H.
\newblock Value learning for throughput optimization of deep learning
  workloads.
\newblock In Smola, A., Dimakis, A., and Stoica, I. (eds.), \emph{Proceedings
  of Machine Learning and Systems}, volume~3, pp.\  323--334, 2021.
\newblock URL
  \url{https://proceedings.mlsys.org/paper/2021/file/73278a4a86960eeb576a8fd4c9ec6997-Paper.pdf}.

\bibitem[Swersky et~al.(2013)Swersky, Snoek, and Adams]{swersky2013multitask}
Swersky, K., Snoek, J., and Adams, R.~P.
\newblock Multi-task bayesian optimization.
\newblock In Burges, C., Bottou, L., Welling, M., Ghahramani, Z., and
  Weinberger, K. (eds.), \emph{Advances in Neural Information Processing
  Systems}, volume~26. Curran Associates, Inc., 2013.
\newblock URL
  \url{https://proceedings.neurips.cc/paper/2013/file/f33ba15effa5c10e873bf3842afb46a6-Paper.pdf}.

\bibitem[Van~Rossum \& Drake(2009)Van~Rossum and Drake]{python}
Van~Rossum, G. and Drake, F.~L.
\newblock \emph{Python 3 Reference Manual}.
\newblock CreateSpace, Scotts Valley, CA, 2009.
\newblock ISBN 1441412697.

\bibitem[Vaswani et~al.(2017)Vaswani, Shazeer, Parmar, Uszkoreit, Jones, Gomez,
  Kaiser, and Polosukhin]{vaswani2017attentionisallyouneed}
Vaswani, A., Shazeer, N., Parmar, N., Uszkoreit, J., Jones, L., Gomez, A.~N.,
  Kaiser, {\L}., and Polosukhin, I.
\newblock Attention is all you need.
\newblock \emph{Advances in neural information processing systems}, 30, 2017.

\bibitem[Vo{\ss} et~al.(2012)Vo{\ss}, Martello, Osman, and
  Roucairol]{voss2012meta}
Vo{\ss}, S., Martello, S., Osman, I.~H., and Roucairol, C.
\newblock \emph{Meta-heuristics: Advances and trends in local search paradigms
  for optimization}.
\newblock Springer Science \& Business Media, 2012.

\bibitem[Wang et~al.(2020{\natexlab{a}})Wang, Wilder, Perrault, and
  Tambe]{wang2020surrogates}
Wang, K., Wilder, B., Perrault, A., and Tambe, M.
\newblock Automatically learning compact quality-aware surrogates for
  optimization problems.
\newblock \emph{Advances in Neural Information Processing Systems},
  33:\penalty0 9586--9596, 2020{\natexlab{a}}.

\bibitem[Wang et~al.(2020{\natexlab{b}})Wang, Fonseca, and
  Tian]{wang2020lamcts}
Wang, L., Fonseca, R., and Tian, Y.
\newblock Learning search space partition for black-box optimization using
  monte carlo tree search.
\newblock \emph{Advances in Neural Information Processing Systems},
  33:\penalty0 19511--19522, 2020{\natexlab{b}}.

\bibitem[Wang et~al.(2021{\natexlab{a}})Wang, Xie, Li, Fonseca, and
  Tian]{wang2021sample}
Wang, L., Xie, S., Li, T., Fonseca, R., and Tian, Y.
\newblock Sample-efficient neural architecture search by learning actions for
  monte carlo tree search.
\newblock \emph{IEEE Transactions on Pattern Analysis and Machine
  Intelligence}, 2021{\natexlab{a}}.

\bibitem[Wang et~al.(2019)Wang, Donti, Wilder, and Kolter]{wang2019satnet}
Wang, P.-W., Donti, P., Wilder, B., and Kolter, Z.
\newblock Satnet: Bridging deep learning and logical reasoning using a
  differentiable satisfiability solver.
\newblock In \emph{International Conference on Machine Learning}, pp.\
  6545--6554. PMLR, 2019.

\bibitem[Wang et~al.(2021{\natexlab{b}})Wang, Liu, Zhao, Liu, Liu, and
  Yan]{wang2021surrogate}
Wang, X., Liu, Y., Zhao, J., Liu, C., Liu, J., and Yan, J.
\newblock Surrogate model enabled deep reinforcement learning for hybrid energy
  community operation.
\newblock \emph{Applied Energy}, 289:\penalty0 116722, 2021{\natexlab{b}}.

\bibitem[Wetzstein et~al.(2020)Wetzstein, Ozcan, Gigan, Fan, Englund,
  Solja{\v{c}}i{\'c}, Denz, Miller, and Psaltis]{wetzstein2020photomlaccel}
Wetzstein, G., Ozcan, A., Gigan, S., Fan, S., Englund, D., Solja{\v{c}}i{\'c},
  M., Denz, C., Miller, D.~A., and Psaltis, D.
\newblock Inference in artificial intelligence with deep optics and photonics.
\newblock \emph{Nature}, 588\penalty0 (7836):\penalty0 39--47, 2020.

\bibitem[Wilder et~al.(2019{\natexlab{a}})Wilder, Dilkina, and
  Tambe]{wilder2019melding}
Wilder, B., Dilkina, B., and Tambe, M.
\newblock Melding the data-decisions pipeline: Decision-focused learning for
  combinatorial optimization.
\newblock In \emph{Proceedings of the AAAI Conference on Artificial
  Intelligence}, volume~33, pp.\  1658--1665, 2019{\natexlab{a}}.

\bibitem[Wilder et~al.(2019{\natexlab{b}})Wilder, Ewing, Dilkina, and
  Tambe]{wilder2019graphcombopt}
Wilder, B., Ewing, E., Dilkina, B., and Tambe, M.
\newblock End to end learning and optimization on graphs.
\newblock \emph{Advances in Neural Information Processing Systems}, 32,
  2019{\natexlab{b}}.

\bibitem[Wolsey(1982)]{wolsey1982analysis}
Wolsey, L.~A.
\newblock An analysis of the greedy algorithm for the submodular set covering
  problem.
\newblock \emph{Combinatorica}, 2\penalty0 (4):\penalty0 385--393, 1982.

\bibitem[Wolsey(2007)]{wolsey2007mixed}
Wolsey, L.~A.
\newblock Mixed integer programming.
\newblock \emph{Wiley Encyclopedia of Computer Science and Engineering}, pp.\
  1--10, 2007.

\bibitem[Yang et~al.(2021)Yang, Zhang, Cummins, Cui, Steiner, Wang, Gonzalez,
  Klein, and Tian]{yang2021learning}
Yang, K., Zhang, T., Cummins, C., Cui, B., Steiner, B., Wang, L., Gonzalez,
  J.~E., Klein, D., and Tian, Y.
\newblock Learning space partitions for path planning.
\newblock \emph{Advances in Neural Information Processing Systems},
  34:\penalty0 378--391, 2021.

\bibitem[Ye et~al.(2019)Ye, Zhang, and Sun]{ye2019automated}
Ye, Y., Zhang, X., and Sun, J.
\newblock Automated vehicle’s behavior decision making using deep
  reinforcement learning and high-fidelity simulation environment.
\newblock \emph{Transportation Research Part C: Emerging Technologies},
  107:\penalty0 155--170, 2019.

\bibitem[Zha et~al.(2022{\natexlab{a}})Zha, Feng, Bhushanam, Choudhary, Nie,
  Tian, Chae, Ma, Kejariwal, and Hu]{zha2022autoshard}
Zha, D., Feng, L., Bhushanam, B., Choudhary, D., Nie, J., Tian, Y., Chae, J.,
  Ma, Y., Kejariwal, A., and Hu, X.
\newblock Autoshard: Automated embedding table sharding for recommender
  systems.
\newblock In \emph{Proceedings of the 28th ACM SIGKDD Conference on Knowledge
  Discovery and Data Mining}, pp.\  4461--4471, 2022{\natexlab{a}}.

\bibitem[Zha et~al.(2022{\natexlab{b}})Zha, Feng, Tan, Liu, Lai, Bhargav, Tian,
  Kejariwal, and Hu]{zha2022dreamshard}
Zha, D., Feng, L., Tan, Q., Liu, Z., Lai, K.-H., Bhargav, B., Tian, Y.,
  Kejariwal, A., and Hu, X.
\newblock Dreamshard: Generalizable embedding table placement for recommender
  systems.
\newblock In \emph{Advances in Neural Information Processing Systems},
  2022{\natexlab{b}}.

\bibitem[Zha et~al.(2023)Zha, Feng, Luo, Bhushanam, Liu, Hu, Nie, Huang, Tian,
  Kejariwal, et~al.]{zha2023pre}
Zha, D., Feng, L., Luo, L., Bhushanam, B., Liu, Z., Hu, Y., Nie, J., Huang, Y.,
  Tian, Y., Kejariwal, A., et~al.
\newblock Pre-train and search: Efficient embedding table sharding with
  pre-trained neural cost models.
\newblock In \emph{Sixth Conference on Machine Learning and Systems}, 2023.

\bibitem[Zhang \& Dietterich(1995)Zhang and Dietterich]{zhang1995rljobshop}
Zhang, W. and Dietterich, T.~G.
\newblock A reinforcement learning approach to job-shop scheduling.
\newblock In \emph{IJCAI}, volume~95, pp.\  1114--1120. Citeseer, 1995.

\bibitem[Zhao et~al.(2022)Zhao, Wang, Yang, Zhang, Guo, and
  Tian]{zhao2022multiobjective}
Zhao, Y., Wang, L., Yang, K., Zhang, T., Guo, T., and Tian, Y.
\newblock Multi-objective optimization by learning space partition.
\newblock In \emph{International Conference on Learning Representations}, 2022.
\newblock URL \url{https://openreview.net/forum?id=FlwzVjfMryn}.

\bibitem[Zhou et~al.(2020)Zhou, Roy, Abdolrashidi, Wong, Ma, Xu, Liu,
  Phothilimtha, Wang, Goldie, et~al.]{zhou2020compiler}
Zhou, Y., Roy, S., Abdolrashidi, A., Wong, D., Ma, P., Xu, Q., Liu, H.,
  Phothilimtha, P., Wang, S., Goldie, A., et~al.
\newblock Transferable graph optimizers for ml compilers.
\newblock \emph{Advances in Neural Information Processing Systems},
  33:\penalty0 13844--13855, 2020.

\end{thebibliography}
\bibliographystyle{icml2023}
\newpage
\onecolumn
\appendix

\section{Proofs}

\suffcondition*
\begin{proof}
Since the dataset is a $\epsilon/L$-cover, for any $\vy\in Y$, there exists at least one $\vy_i$ so that $\|\vy-\vy_i\|_2 \le \epsilon/L$. Let $\vy_\nn$ be the nearest neighbor of $\vy$, and we have:
\begin{equation}
    \|\vy - \vy_{\nn}\|_2 \le \|\vy - \vy_{i}\|_2 \le \epsilon/L
\end{equation}
From the Lipschitz condition and the definition of 1-nearest-neighbor classifier ($\hat\vphi(\vy) = \vphi(\vy_\nn)$), we know that 
\begin{equation}
    \|\vphi(\vy) - \hat \vphi(\vy)\|_2 = \|\vphi(\vy) - \vphi(\vy_{\nn})\|_2 \le L\|\vy - \vy_{\nn}\|_2 \le \epsilon  
\end{equation}
\end{proof}

\lowerboundcomplexity*
\begin{proof}
We prove by contradiction. If $N < N_0(\epsilon)$, then for each training sample $(\vy_i, \vphi_i)$, we create a ball $B_i := B\left(\vy_i, \epsilon / L\right)$. Since 
\begin{equation}
\vol\left( \bigcup_{i=1}^N B_i \cap Y \right) \le \vol\left( \bigcup_{i=1}^N B_i \right) \le  \sum_{i=1}^N \vol(B_i) = N \vol_0 \left(\frac{\epsilon}{L}\right)^d < \vol(Y) 
\end{equation}
Therefore, there exists at least one $\vy\in Y$ so that $\vy \notin B_i$ for any $1\le i\le N$. This means that $\vy$ is not $\epsilon/L$-covered. 
\end{proof}

\infinitelip*
\begin{proof}
Let $R_1, R_2, \ldots, R_K$ be the $K = \kappa(\vphi(Y))$ connected components of $\vphi(Y)$, and $Y_1, Y_2, \ldots, Y_J$ be the $J = \kappa(Y)$ connected components of $Y$. From the condition, we know that $\min_{k\neq k'} \mathrm{dist}(R_k, R_{k'}) = d_{\min} > 0$. 

We have $R_k \cap R_{k'} = \emptyset$ for $k\neq k'$. Each $R_k$ has a pre-image $S_k := \vphi^{-1}(R_k) \subseteq Y$. These pre-images $\{S_k\}_{k=1}^K$ form a partition of $Y$ since
\begin{itemize}
    \item $S_k \cap S_{k'} = \emptyset$ for $k\neq k'$ since any $\vy\in Y$ cannot be mapped to more than one connected components; 
    \item $\bigcup_{k=1}^K S_k = \bigcup_{k=1}^K \vphi^{-1}(R_{k}) = \vphi^{-1}\left(\bigcup_{k=1}^K R_k\right) = \vphi^{-1}(\vphi(S)) = S$. 
\end{itemize}
Since $K = \kappa(\vphi(Y)) > \kappa(Y)$, by pigeonhole principle, there exists one $Y_j$ that contains at least part of the two pre-images $S_k$ and $S_{k'}$ with $k \neq k'$. This means that
\begin{equation}
    S_k \cap Y_j \neq \emptyset,\quad S_{k'} \cap Y_j \neq \emptyset
\end{equation}
Then we pick $\vy \in S_k \cap Y_j$ and $\vy' \in S_{k'} \cap Y_j$. Since $\vy,\vy'\in Y_j$ and $Y_j$ is a connected component, there exists a continuous path $\gamma: [0,1]\mapsto Y_j$ so that $\gamma(0) = \vy$ and $\gamma(1) = \vy'$. Therefore, we have $\vphi(\gamma(0)) \in R_k$ and $\vphi(\gamma(1)) \in R_{k'}$. Let $t_0 := \sup\{t: t\in [0,1], \vphi(\gamma(t)) \in R_k\}$, then $0 \le t_0 < 1$. For any sufficiently small $\epsilon > 0$, we have:
\begin{itemize}
    \item By the definition of $\sup$, we know there exists $t_0 - \epsilon \le t' \le t_0$ so that $\vphi(\gamma(t')) \in R_k$. 
    \item Picking $t'' = t_0 + \epsilon < 1$, then $\vphi(\gamma(t'')) \in R_{k''}$ with some $k''\neq k$. 
\end{itemize}
On the other hand, by continuity of the curve $\gamma$, there exists a constant $C(t_0)$ so that $\|\gamma(t') - \gamma(t'')\|_2 \le C(t_0)\|t'-t''\|_2 \le 2C(t_0)\epsilon$. Then we have 
\begin{equation}
    L = \max_{\vy,\vy'\in Y} \frac{\|\vphi(\vy)-\vphi(\vy')\|_2}{\|\vy-\vy'\|_2} \ge \frac{\|\vphi(\gamma(t')) - \vphi(\gamma(t''))\|_2}{\|\gamma(t') - \gamma(t'')\|_2} \ge \frac{d_{\min}}{2C(t_0)\epsilon} \rightarrow +\infty
\end{equation}

\end{proof}

\section{Experiment Details}
\label{sec:experiment_appendix}

\subsection{Setups}
Experiments are performed on a cluster of identical machines, each with 4 Nvidia A100 GPUs and 32 CPU cores, with 1T of RAM and 40GB of GPU memory. Additionally, we perform all operations in Python \citep{python} using Pytorch \citep{pytorch}. For embedding table placement, the nonlinear cost estimator is trained for 200 iterations and the offline-trained models of Dreamshard and \ours{}-\trainvershort{} are trained against the pretrained cost estimator for 200 iterations. The DLRM Dataset \cite{DLRM19} is available at \url{https://github.com/facebookresearch/dlrm_datasets}, and the dreamshard \citep{zha2022dreamshard} code is available at \url{https://github.com/daochenzha/dreamshard}. Additional details on dreamshard's model architecture and features can be obtained in the paper and codebase. {Training time for the networks used in \ours{}-\trainver{} and \ours{}-\hybridver{} are on average 8 hours for the inverse photonic design settings and 6, 21, 39, 44, 50, 63 minutes for DLRM 10, 20, 30, 40, 50, 60 settings respectively}.

\subsection{Network Architectures}
\subsubsection{Embedding Table Sharding}
The table features are the same used in \cite{zha2022dreamshard}, and sinusoidal positional encoding~\cite{vaswani2017attentionisallyouneed} is used as device features so that the learning model is able to break symmetries between the different tables and effectively group them onto homogeneous devices. The table and device features are concatenated and then fed into Dreamshard's initial fully-connected table encoding module to obtain scalar predictions $\surrobjlin{}_{t,d}$ for each desired objective coefficient. {The architecture is trained with the Adam optimizer with learning rate 0.0005}. Here, we use the dreamshard backbone to predict coefficients for each table-device pair. We add more output dimensions to the dreamshard backbone, ensuring that we output the desired number of coefficients.

\subsubsection{Inverse Photonic Design}
\textbf{Network architectures}. The input design specification (a 2D image) is passed through a 3 layer convolutional neural network with ReLU activations and a final layer composed of filtering with the known brush shape. Then a tanh activation is used to obtain surrogate coefficients $\hat\vc$, one component for each binary input variable. {The architecture is trained with the Adam optimizer with learning rate 0.001}.

This is motivated by previous work \citep{schubert_inverse_2022} that also uses the fixed brush shape filter and tanh operation to transform the latent parameters into a continuous solution that is projected onto the space of physically feasible solutions. 

{
In each setting, optimization is done on a binary grid of different sizes to meet fabrication constraints, namely that a 3 by 3 cross must fit inside each fixed and void location. In the beam splitter the design is an $80\times 60$ grid, in mode converter it is a $40\times40$ grid, in waveguide bend it is a $40\times40$ grid, in wavelength division multiplexer it is an $80\times80$ grid.
}

\begin{table}
    \centering
    \begin{tabular}{l|l}
        Task  & Randomization  \\
        \hline\hline
        mode converter & randomize the right and left waveguide width \\
        bend setting   & randomize the waveguide width and length \\
        beam splitter  & randomize the waveguide separation, width and length \\
        wavelength division multiplexer & randomize the input and output waveguide locations
    \end{tabular}
    \caption{Task randomization of 4 different tasks in inverse photonic design.}
    \label{tab:my_label}
\end{table}

Previous work formulated the projection as finding a discrete solution that minimized the dot product of the input continuous solution and proposed discrete solution. The authors then updated the continuous solution by computing gradients of the loss with respect to the discrete solution and using pass-through gradients to update the continuous solution. By comparison, our approach treats the projection as an optimization problem and updates the objective coefficients so that the resulting projected solution moves in the direction of the desired gradient. 

To compute the gradient of this blackbox projection solver, we leverage the approach suggested by \cite{poganvcic2019diffbb} which calls the solver twice, once with the original coefficients, and again with coefficients that are perturbed in the direction of the incoming solution gradient as being an ``improved solution''. The gradient with respect to the input coefficients are then the difference between the ``improved solution'' and the solution for the current objective coefficients.




\section{Pseudocode}
{
Here is the pseudocode for the different variants of our algorithm. Each of these leverage a differentiable optimization solver to differentiate through the surrogate optimization problem.}

\begin{algorithm}[H]
   \caption{\ours{}-\optver{}}
   \label{alg:surco-zero}
\begin{algorithmic}
\STATE {\bfseries Input:} feasible region $\feas{}$, data $\vy$, objective $f$
\STATE $\vc \gets$ init\_surrogate\_coefs$(\vy)$
\WHILE{not converged}
    \STATE $\vx \gets \arg\min_{\vx\in \Omega(\vy)} \vc^\top \vx$
    \STATE loss $\gets f(\vx;\vy)$
    \STATE $\vc \gets $grad\_update$(\vc, \nabla_{\vc} \text{loss})$
\ENDWHILE
\STATE Return $\vx$
\end{algorithmic}
\end{algorithm}

\begin{algorithm}[H]
\caption{\ours{}-\trainver{} Training}\label{alg:surco-prior}
\begin{algorithmic}
\STATE {\bfseries Input:} feasible region $\feas{}$, data $\cD_\train = \{\vy_i\}_{i=1}^N$, objective $f$
\STATE $\theta \gets$ init\_surrogate\_model$()$
\WHILE {not converged}
    \STATE Sample batch $B=\{\vy_i\}_i^k \sim \cD_\train$
    \FOR{$\vy \in B$}
    \STATE $\hat{\vc} \gets \hat{\vc}(\vy;\theta)$
    \STATE $\vx \gets \arg\min_{\vx\in \Omega(\vy)} \vc^\top \vx$
    \STATE loss $\mathrel{+}= f(\vx;\vy)$
    \ENDFOR
    \STATE $\theta \gets $grad\_update$(\theta, \nabla_{\theta} \text{loss})$
\ENDWHILE
\STATE Return $\theta$
\end{algorithmic}
\end{algorithm}

\begin{algorithm}[H]
\caption{\ours{}-\trainver{} Deployment}\label{alg:surco-prior-test}
\begin{algorithmic}[1]
\STATE {\bfseries Input:} feasible region $\feas{}$, data $\cD_\train = \{\vy_i\}_{i=1}^N$, objective $f$, test instance $\vy_{\text{test}}$
\STATE $\theta \gets $ train \ours{}-\trainver{}$(\feas{}, \cD_\train, f)$
\STATE $\vc \gets \hat{\vc}(\vy;\theta)$
\STATE $\vx \gets \arg\min_{\vx\in \Omega(\vy)} \vc^\top \vx$
\STATE Return $\vx$
\end{algorithmic}
\end{algorithm}

\begin{algorithm}[H]
\caption{\ours{}-\hybridver{}}\label{alg:surco-hybrid}
\begin{algorithmic}[1]
\STATE {\bfseries Input:} feasible region $\feas{}$, data $\cD_\train = \{\vy_i\}_{i=1}^N$, objective $f$, test instance $\vy_{\text{test}}$
\STATE $\theta \gets $ train \ours{}-\trainver{}$(\feas{}, \cD_\train, f)$
\STATE $\vc \gets \hat{\vc}(\vy;\theta)$
\WHILE{not converged}
    \STATE $\vx \gets \arg\min_{\vx\in \Omega(\vy)} \vc^\top \vx$
    \STATE loss $\gets f(\vx;\vy)$
    \STATE $\vc \gets $grad\_update$(\vc, \nabla_{\vc} \text{loss})$
\ENDWHILE
\STATE Return $\vx$
\end{algorithmic}
\end{algorithm}

\section{Additional Failed Baselines}
\paragraph{SOGA - Single Objective Genetic Algorithm}
Using PyGAD \citep{gad2021pygad}, we attempted several approaches for both table sharding and inverse photonics settings. While we were able to obtain feasible table sharding solutions, they underperformed the greedy baseline by $20\%$. Additionally, they were unable to find physically feasible inverse photonics solutions. We varied between random, swap, inversion, and scramble mutations and used all parent selection methods but were unable to find viable solutions.

\paragraph{DFL - A Derivative-Free Library}
We could not easily integrate DFLGEN \citep{liuzzi2015dfl} into our pipelines since it operates in fortran and we needed to specify the feasible region with python in the ceviche challenges. DFLINT works in python but took more than 24 hours to run on individual instances which reached a timeout limit. We found that the much longer runtime made this inapplicable for the domains of interest. 



\paragraph{Nevergrad} We enforced integrality in Nevergrad \citep{nevergrad} using choice variables which selected between 0 and 1. This approach was unable to find feasible solutions for inverse photonics in less than 10 hours. For table sharding we obtained solutions by using a choice variable for each table, selecting one of the available devices. This approach was not able to outperform the greedy baseline and took longer time so it was strictly dominated by the greedy approach.

\paragraph{Solution Prediction} We made several attempts at training solution predictors for each of our domains. We label each problem instance with the best-known solution obtained (including those obtained via SurCo). Note that predicting feasible solutions to combinatorial optimization problems is nontrivial for general settings.

We evaluate solution prediction architectures in each setting. The models here match the architecture of SurCo-prior but the output is fed through a sigmoid transformation to get predictions in [0,1]. In nonlinear shortest path we use a GCN architecture and predict [0,1] whether edges are in the shortest s-t path. Not surprisingly, we found that predicting solutions to combinatorial problems is a nontrivial problem, further motivating the use of SurCo which ensures combinatorial feasibility of the generated solution.

Note that the solutions predicted by the networks may not be binary (and thus not feasible). We then round the individual decision variables to get binary predictions. Empirically, we found that our predictions are very close to binary, indicating that rounding is more a numerical exactness operation than an algorithmic decision, with the largest distance from any original to rounded value being 0.0008 for inverse photonics, 0.0001 for nonlinear shortest path, and 0.0007 for the assignment problem of table sharding.

We evaluate the results on unseen test instances in Table \ref{tab:solution_pred} and find that these solution prediction approaches don’t yield combinatorially feasible solutions. We present machin learning performance in the table below to verify that the predictive models perform ``well'' in terms of standard machine learning evaluation even though they fail to generate feasible solutions.

\begin{table*}[ht]
\scriptsize
\centering
\begin{tabular}{r|ccc}
\toprule
Setting   & Decision Variable Accuracy Average & Solution Accuracy & Solution Feasibility Rate \\
\midrule
Inverse Photonics - Sigmoid            & 87\%                                                            & 0\%                                            & 0\%                                                    \\
Nonlinear Shortest Path - Sigmoid      & 95\%                                                            & 0\%                                            & 0\%                                                    \\
Table Sharding - Sigmoid               & 92\%                                                            & 0\%                                            & 0\%                                                    \\
Table Sharding - Softmax               & 88\%                                                            & 0\%                                            & 0\%                                                    \\
Table Sharding - Softmax +   Iterative & 70\%                                                            & 0\%                                            & 100\%                                                 \\
\bottomrule
\end{tabular}
\caption{Solution prediction results, most methods give infeasible solutions.}\label{tab:solution_pred}
\end{table*}

\begin{table*}[ht]
\scriptsize
\centering
\begin{tabular}{r|c}
\toprule
Setting & \% Latency Increase vs Domain Heuristic (worst baseline) \\
\midrule
DLRM-10 & 6\%                                                      \\
DLRM-20 & 5\%                                                      \\
DLRM-30 & 9\%                                                      \\
DLRM-40 & 7\%                                                      \\
DLRM-50 & 3\%                                                      \\
DLRM-60 & 11\%                                                    \\
\bottomrule
\end{tabular}
\caption{Comparison of only feasible solution prediction method against worst baseline.}\label{tab:solution_pred_table_shard}
\end{table*}

We also iterate on table sharding to produce two more domain-specific approaches. We evaluate a model variant which assigns each table into one of the 4 devices using softmax, which empirically fails to yield feasible solutions that meet device memory limits for any of our instances. We further develop a method called Softmax + Iterative which iteratively assigns the most likely table-device assignment as long as the device has enough memory to hold the device. Luckily, this Softmax + Iterative method empirically yields feasible solutions in this setting but we note that this approach is not guaranteed to terminate in feasible solutions, unlike SurCo. To see why Softmax + Iterative does not necessarily guarantee feasible termination, consider assigning 3 tables (2 small and 1 large) to 2 devices each with memory limit of 2, the small tables have memory 1 and the large table has memory 2. If the model’s highest assignment probability is on the small tables being evenly distributed across devices, the algorithm will first assign the small tables to devices 1 and 2 but stall because it is unable to assign the large table since neither device has enough remaining capacity. We present results for this Softmax + Iterative approach compared to our domain heuristic which is the worst performing baseline in Table \ref{tab:solution_pred_table_shard}.

For each setting, we evaluate the three metrics:
\begin{itemize}
    \item \textbf{Decision Variable Accuracy Average}, is the average percent of variables which are correctly predicted.
    \item \textbf{The solution accuracy}, is the rate of predicting the full solution correctly (all decision variables predicted correctly).
    \item \textbf{The solution feasibility rate}, is the percent of instances for which the predicted solution satisfies the constraints.

\end{itemize}

\end{document}